%% file: main.tex
\documentclass[10pt,twocolumn,letterpaper]{article}
\usepackage[pagenumbers]{cvpr} 
\input{preamble}

\usepackage{colortbl}
\usepackage{multirow}
\usepackage{tabularx}
\usepackage{pifont}
\usepackage{algorithm}
\usepackage{array}
\usepackage{caption}
\usepackage{adjustbox}
\usepackage{makecell}
\usepackage{algpseudocode}
\usepackage[table]{xcolor}
\definecolor{purple}{RGB}{220,220,250}
\definecolor{aogreen}{rgb}{0.0, 0.5, 0.0}

\definecolor{ExpA}{RGB}{225,247,250} 
\definecolor{ExpB}{RGB}{238,248,241} 
\definecolor{ExpC}{RGB}{255,245,230} 
\definecolor{ExpD}{RGB}{245,238,250} 
\definecolor{ExpE}{RGB}{252,235,238} 

\definecolor{ExpAText}{RGB}{0,135,160}
\definecolor{ExpBText}{RGB}{45,120,70}
\definecolor{ExpCText}{RGB}{180,105,25}
\definecolor{ExpDText}{RGB}{120,70,155}
\definecolor{ExpEText}{RGB}{190,30,55}

\definecolor{lightblue}{RGB}{215, 238, 255}
\definecolor{lightred}{RGB}{255, 225, 230}
\definecolor{improve}{HTML}{147A57}
\definecolor{regress}{HTML}{C8102E}
\definecolor{metabg}{HTML}{F7F7FC}
\definecolor{tablecolor}{HTML}{EAEAF4}
\definecolor{baselineband}{HTML}{EDEDED}
\definecolor{defaultband}{HTML}{E8F1FB}
\newcommand{\imp}[1]{\,\textsuperscript{\textcolor{improve}{\footnotesize\textbf{#1}{\boldmath$\times$}}}}
\newcommand{\dgood}[1]{\textcolor{improve}{\boldmath$#1$}}
\newcommand{\dbad}[1]{\textcolor{regress}{$#1$}}
\newlength{\maxbarwidth}

\newcommand{\neutdiff}[1]{\space\textcolor{gray}{\footnotesize(+0.0)}}

\newcommand{\cmark}{\ding{51}}%
 
\newcommand{\tokdrop}[1]{\textcolor{aogreen}{\footnotesize\,$\downarrow$#1\% tok}}
\newcommand{\graydiff}[1]{\,\textcolor{gray}{\footnotesize(#1)}}
\newcommand{\graydifftext}[1]{\,\textcolor{gray}{#1}}
\definecolor{metablue}{HTML}{0064E0}
\colorlet{covbarcol}{metablue!18}
\newlength{\covbarmax}
\newcommand{\covbar}[2]{%
  \rlap{\textcolor{covbarcol}{\rule[-0.55ex]{\dimexpr#1\covbarmax/100\relax}{2.5ex}}}%
  \hspace{4pt}#2}

\usepackage{amsthm}
\newtheorem{theorem}{Theorem}
\newtheorem{proposition}{Proposition}

\newtheorem{lemma}{Lemma}
\newcommand{\xmark}{\ding{55}}
\usepackage[most]{tcolorbox}
\newtcolorbox{callout}[1][]{%
  enhanced, breakable,
  left=2mm,right=2mm,top=2mm,bottom=2mm,
  boxrule=1pt,
  colback=metabg!80!black,
  colback=metabg, 
  arc=10pt,
  before skip=15pt,
  grow to left by=3pt,
  grow to right by=3pt,
  #1
}

\definecolor{coverpurple}{rgb}{0.471,0.471,0.690}
\usepackage[pagebackref,breaklinks,colorlinks,allcolors=coverpurple]{hyperref}
\hypersetup{
  pdftitle={CoVeR: Coverage-Based Token Pruning for Multi-View 3D Reasoning in VLMs},
  pdfauthor={Nhat-Tan Bui, Varshini Elangovan, Arun Reddy Anugu, Sreyas Mohan, Wei Ye, Dilin Wang, JQ Huang, Rakesh Ranjan, Aviral Chharia, Fernando De la Torre},
  pdfsubject={Multi-view 3D reasoning in vision-language models via coverage-based visual token pruning},
  pdfkeywords={visual token pruning, token reduction, multi-view 3D reasoning, vision-language models, VLM, 3D scene understanding, spatial coverage, training-free, plug-and-play, spatial reasoning, situated reasoning, embodied question answering, ScanQA, SQA3D, OpenEQA},
  pdfcreator={LaTeX}
}

\title{CoVeR: Coverage-Based Token Pruning for Multi-View 3D Reasoning in VLMs}

\author{
Nhat-Tan Bui$^{1,*}$ \quad
Varshini Elangovan$^{1,*}$ \quad
Arun Reddy Anugu$^{1}$ \quad
Sreyas Mohan$^{2}$ \quad
Wei Ye$^{2}$ \\
Dilin Wang$^{2}$ \quad
JQ Huang$^{2}$ \quad
Rakesh Ranjan$^{2}$ \quad
Aviral Chharia$^{1,\dagger}$ \quad
Fernando De la Torre$^{1}$ \\[5pt]
$^1$Carnegie Mellon University \quad
$^2$Meta Reality Labs\\[5pt]
\url{https://humansensinglab.github.io/CoVeR}
}
\begin{document}
\twocolumn[{
\renewcommand\twocolumn[1][]{#1}
\maketitle
\vspace{-3.25em}
\begin{center}
    \captionsetup{type=figure}
    \centering
    \includegraphics[width=\textwidth]{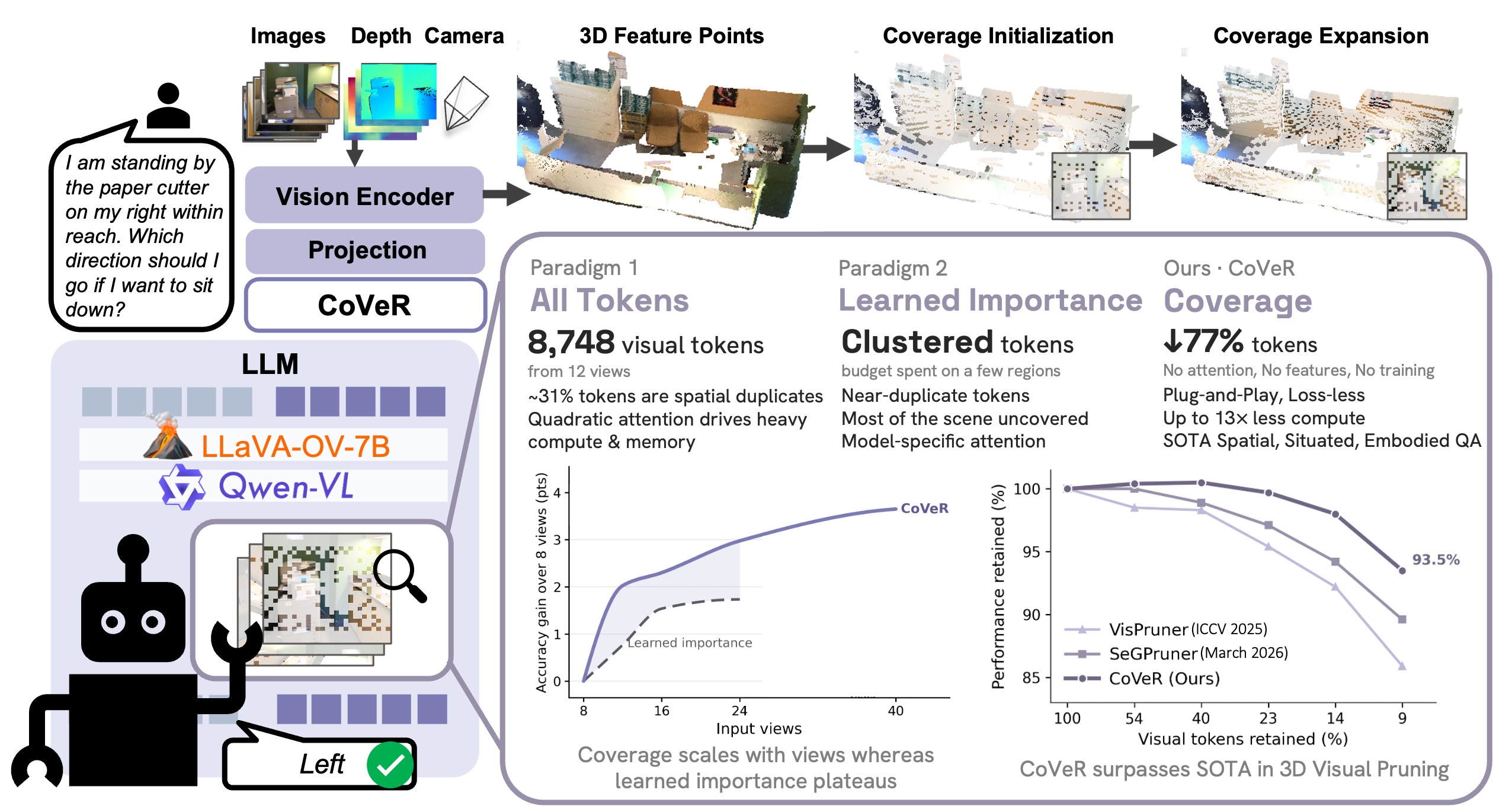}
    \caption{\textbf{Overview of CoVeR.} A deterministic, training-free token selector for multi-view 3D reasoning in 2D VLMs that prunes by spatial coverage, using geometry alone. We show that CoVeR surpasses state-of-the-art token pruning methods in the multi-view 3D setting.}
    \label{fig:teasor}
    \vspace{0.5em}
\end{center}}]
\blfootnote{*Equal Contribution\  $^\dagger$Corresponding Author}
\input{sec/0_abstract}
\input{sec/1_intro}

\input{sec/2_related}

\input{sec/3_method}
\input{sec/4_experiment}
\input{sec/5_conclusion}
{
    \small
    \bibliographystyle{ieeenat_fullname}
    \bibliography{main}
}
\thispagestyle{empty}
\maketitlesupplementary
\appendix
\setcounter{figure}{0}
\setcounter{table}{0}
\setcounter{equation}{0}
\renewcommand{\thefigure}{A\arabic{figure}}
\renewcommand{\thetable}{A\arabic{table}}
\renewcommand{\theequation}{A\arabic{equation}}
\input{sec/6_appendix}

\end{document}

%% file: preamble.tex
\newcommand{\blfootnote}[1]{%
  \begingroup
    \renewcommand\thefootnote{}%
    \footnotetext{#1}%
  \endgroup}

\newcommand{\kw}[1]{\textcolor{coverpurple}{#1}}
\newcommand{\kwb}[1]{\textcolor{coverpurple}{\textbf{#1}}}
\usepackage{microtype}

\renewcommand{\paragraph}[1]{\vspace{.5em}\noindent\textbf{#1.}}

\usepackage{xspace}

\usepackage{soul}
\setuldepth{foobar}

%% file: sec/0_abstract.tex
\begin{abstract}
Representing a 3D scene as multi-view images allows 2D VLMs to reason in 3D by reusing priors from pre-training, sidestepping the scarcity of annotated 3D data. However, it produces thousands of redundant visual tokens whose cost grows with every view. Existing visual token pruners fall into two families, each limited in the 3D multi-view setting. \textcolor{ExpDText}{Learned importance} methods rank tokens by attention or encoder features; because redundancy here is fundamentally spatial, they keep near-duplicate tokens from a few prominent regions and leave most of the scene unrepresented. \textcolor{ExpCText}{Voxelization} methods improve spatial coverage but cannot enforce an exact token budget and saturate as multi-view observations overlap in 3D, capping retention well below the target. We show that spatial coverage is associated with 3D reasoning performance and introduce \textbf{CoVeR}, a deterministic, training-free selector that uses only token coordinates, with no learned signals. CoVeR selects tokens that collectively cover every region of the scene, and solves the limitations of both families: it enforces an exact per-scene budget, breaks the \textcolor{ExpCText}{voxelization} saturation plateau, and avoids the near-duplicate selections of \textcolor{ExpDText}{learned importance}. Extensive experiments show CoVeR outperforms prior SOTAs on all \textbf{three} 3D reasoning benchmarks and generalizes as a plug-and-play module tested across \textbf{four} VLMs. Notably, with only \kw{$\approx$8\%} of visual tokens, it preserves \kw{93.5\%} of full-token performance, surpassing SOTA by \kw{3.9} percentage points on average across benchmarks.
\vspace{-1em}
\end{abstract}

%% file: sec/1_intro.tex
\section{Introduction}
\label{sec:intro}
Reasoning about 3D space is a prerequisite for systems that perceive and act in the physical world. Directly training models at scale on 3D representation~\cite{3dllm,chatscene,leo,huang20253d,xu20253d,scenellm,guo2023point,chen2024ll3da,xu2024pointllm,gpt4point,tang2024minigpt,tang2025more,inst3d,man2024situational,xiong20253ur} is limited by the scarcity of 3D-language data, which is orders of magnitude smaller than the internet-scale image-text datasets behind 2D VLMs~\cite{llava-onevision, qwen2.5, qwen3}. A practical alternative renders the scene as multi-view images and uses a pre-trained 2D VLM to reason over them~\cite{3drs, cdviews, gpt4scene, video3dllm, llava3d, choi2026drmv3d, hong20233d, splattalk, gwak2026cog3dmap, vlm3r}, inheriting their strong visual and language priors. However, this introduces a major bottleneck: the number of visual tokens grows linearly with the number of views. For example, 12-views produce 8,748 visual tokens in \texttt{LLaVA-OneVision-7B} \cite{llava-onevision}, substantially increasing the LLM inference cost. Reducing the visual token count is thus necessary for scaling 3D reasoning on 2D VLMs. 

\begin{figure*}
    \centering
    \includegraphics[width=\textwidth]{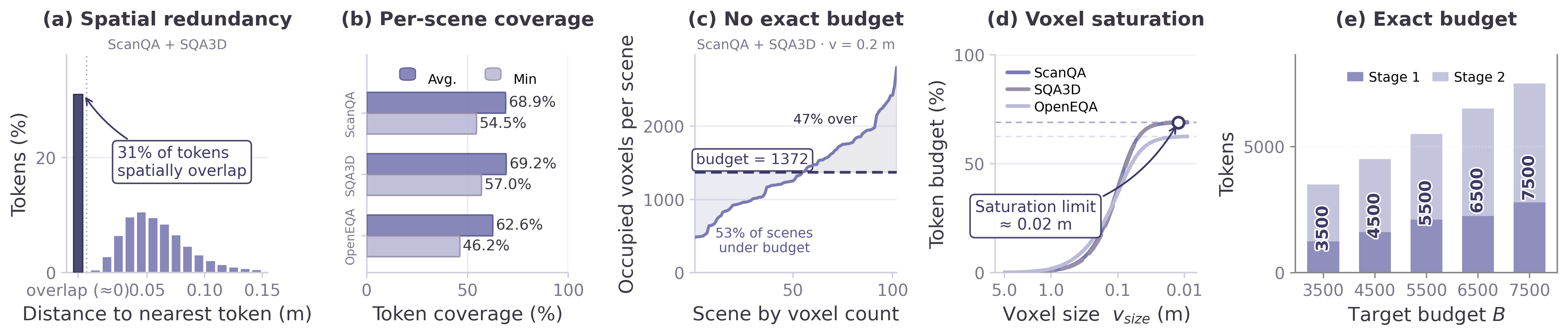}
    \caption{\textbf{Motivation (a-d) and Exact-budget Results (e).} Analysis using 12-frame inputs to \texttt{LLaVA-OneVision-7B}.}
    \label{fig:motivation}
    \vspace{-1em}
\end{figure*}

Most token pruning methods retain the top-$K$ tokens ranked by \textcolor{ExpDText}{\textit{learned importance}} from attention or visual features, an intuition inherited from single-view pruning~\cite{fastv,shang2025llava,song2025less,xing2024pyramiddrop,lin2025boosting,yang2025visionzip,zhang2024sparsevlm,bolya2022token,ye2025fit,vispruner}. We argue this is ill-suited to the multi-view 3D setting, where the dominant redundancy is geometric: different cameras observe the same physical regions, and importance scores computed independently of geometry retain near-duplicate tokens while leaving distinct objects or boundaries elsewhere under-represented. Such signals are also model-specific, depending on attention or auxiliary features that vary across architectures.

\textcolor{ExpCText}{\textit{Voxelization}}-based pruners back-project tokens into 3D space and pool those falling in the same voxels. This improves coverage but does not control the output token count: a fixed voxel size leaves a different number of tokens in different scenes, and the count plateaus once overlapping views already share voxels. For example, $\approx$31\% of tokens overlap (Fig. \ref{fig:motivation}(a)), so voxel-only pruning keeps about 69\% of tokens on average and 46-57\% in the most redundant scenes (Fig. \ref{fig:motivation}(b)). Such methods give coverage but not an exact budget.. Exact control matters because a single scene can exceed a fixed memory or latency limit even when the dataset average stays within it. Both approaches are thus insufficient in the 3D multi-view setting.

We address this with \textbf{CoVeR}, a deterministic, training-free selector that uses only token coordinates, with no attention, features, or learned signals. \textit{Coverage initialization} finds a scene-specific voxel size and keeps one representative token per occupied voxel, removing overlapping observations while preserving a coarse cover of the whole scene. \textit{Coverage expansion} then iteratively selects the tokens farthest from those retained, adding tokens in the least-covered regions until the budget is met exactly, thereby recovering fine-scale evidence beyond the saturation limit. Both stages reduce the directed Hausdorff distance from the original token set to the retained subset. CoVeR yields consistent gains at aggressive budgets and transfers across VLMs. Our contributions include:
\begin{itemize}
    \item \kwb{Problem analysis.} We identify key limitations in both 3D token pruning families: \textcolor{ExpCText}{voxelization}-based methods cannot enforce exact per-scene budgets and are capped by voxel saturation, while \textcolor{ExpDText}{learned importance}-based methods spend their budget on near-duplicates, leaving the scene under-covered.
    
    \item \kwb{Coverage-based paradigm.} We propose CoVeR, a training-free, deterministic, geometry-only token pruning framework that optimizes for scene coverage under a guaranteed exact per-scene budget.
    
    \item \kwb{Analytical insights.} Statistical metrics link geometric coverage to 3D reasoning, while directed distances show it preserves regions favored by \textcolor{ExpDText}{learned importance}; stage-wise analysis shows real tokens beat merged features and pure spatial distance outperforms learned signals.

    \item \kwb{Extensive evaluation.} CoVeR achieves SOTA on spatial scene understanding (ScanQA~\cite{scanqa}), situated reasoning (SQA3D~\cite{sqa}), and embodied question answering (OpenEQA~\cite{openeqa}), while generalizing across four VLMs. On ScanQA at 14\% retention, it cuts LLM TFLOPs by $8.6\times$ and KV cache by $7\times$, with a $1.4\times$ lower GPU memory and $2.5\times$ inference speedup at a 1.1\% relative drop.
\end{itemize}

%% file: sec/2_related.tex
\section{Related Works}
\begin{table}[t]
\centering
\caption{\textbf{Comparison with token pruning methods.} Avoiding learned signals (attention, visual features, auxiliary encoders) removes model-specific dependence, while deterministic selection and exact per-scene budget give precise control over token count.}
\label{tab:pruning_signal}
\scriptsize
\setlength{\tabcolsep}{1pt}
\resizebox{\linewidth}{!}{%
\begin{tabular}{@{}
>{\raggedright\arraybackslash}p{0.28\linewidth}
*{2}{>{\centering\arraybackslash}m{0.1\linewidth}}|
*{3}{>{\centering\arraybackslash}m{0.11\linewidth}}|
>{\centering\arraybackslash}m{0.1\linewidth}
@{}}
\toprule
& \multicolumn{2}{c|}{\scriptsize{\ \textcolor{ExpCText}{Voxelization}\ }}
& \multicolumn{3}{c|}{\scriptsize{\ \textcolor{ExpDText}{Learned Importance}\ }}
& {\scriptsize{\ Ours\ }} \\
\cline{2-3}\cline{4-6}\cline{7-7}

\textbf{Property}
& \rotatebox{65}{\shortstack{\textcolor{ExpCText}{VTC}~\cite{dtc}}}
& \rotatebox{65}{\shortstack{\textcolor{ExpCText}{DTC}~\cite{dtc}}}
& \rotatebox{65}{\shortstack{\textcolor{ExpDText}{VisPruner}~\cite{vispruner}}}
& \rotatebox{65}{\shortstack{\textcolor{ExpDText}{SeGPruner}~\cite{segpruner}}}
& \rotatebox{65}{\shortstack{\textcolor{ExpDText}{Geo3DPruner}~\cite{li2026geometry} \ }}
& \rotatebox{65}{CoVeR} \\
\midrule

Attention-free
& \cellcolor{improve!12}\textcolor{improve}{\cmark}
& \cellcolor{improve!12}\textcolor{improve}{\cmark}
& \cellcolor{regress!9}\textcolor{regress}{\xmark}
& \cellcolor{regress!9}\textcolor{regress}{\xmark}
& \cellcolor{regress!9}\textcolor{regress}{\xmark}
& \cellcolor{improve!12}\textcolor{improve}{\cmark} \\

Visual feature-free
& \cellcolor{improve!12}\textcolor{improve}{\cmark}
& \cellcolor{regress!9}\textcolor{regress}{\xmark}
& \cellcolor{regress!9}\textcolor{regress}{\xmark}
& \cellcolor{regress!9}\textcolor{regress}{\xmark}
& \cellcolor{regress!9}\textcolor{regress}{\xmark}
& \cellcolor{improve!12}\textcolor{improve}{\cmark} \\

Auxiliary encoder free
& \cellcolor{improve!12}\textcolor{improve}{\cmark}
& \cellcolor{improve!12}\textcolor{improve}{\cmark}
& \cellcolor{improve!12}\textcolor{improve}{\cmark}
& \cellcolor{improve!12}\textcolor{improve}{\cmark}
& \cellcolor{regress!9}\textcolor{regress}{\xmark}
& \cellcolor{improve!12}\textcolor{improve}{\cmark} \\

Training-free
& \cellcolor{improve!12}\textcolor{improve}{\cmark}
& \cellcolor{improve!12}\textcolor{improve}{\cmark}
& \cellcolor{improve!12}\textcolor{improve}{\cmark}
& \cellcolor{improve!12}\textcolor{improve}{\cmark}
& \cellcolor{regress!9}\textcolor{regress}{\xmark}
& \cellcolor{improve!12}\textcolor{improve}{\cmark} \\

Deterministic
& \cellcolor{improve!12}\textcolor{improve}{\cmark}
& \cellcolor{regress!9}\textcolor{regress}{\xmark}
& \cellcolor{improve!12}\textcolor{improve}{\cmark}
& \cellcolor{improve!12}\textcolor{improve}{\cmark}
& \cellcolor{improve!12}\textcolor{improve}{\cmark}
& \cellcolor{improve!12}\textcolor{improve}{\cmark} \\

Exact per-scene budget
& \cellcolor{regress!9}\textcolor{regress}{\xmark}
& \cellcolor{regress!9}\textcolor{regress}{\xmark}
& \cellcolor{improve!12}\textcolor{improve}{\cmark}
& \cellcolor{improve!12}\textcolor{improve}{\cmark}
& \cellcolor{improve!12}\textcolor{improve}{\cmark}
& \cellcolor{improve!12}\textcolor{improve}{\cmark} \\

\bottomrule
\end{tabular}%
}
\vspace{-0.5em}
\end{table}

\noindent \textbf{\textcolor{ExpCText}{\textit{Voxelization}}-based Pruning.}  These methods back-project tokens into 3D and reduce them within voxels: VTC~\cite{dtc} averages a voxel's features into a synthetic token, while DTC~\cite{dtc} raises voxel resolution and merges tokens by feature similarity. In all cases, the token count stays tied to geometry, so reported budgets are dataset averages, not exact per-scene counts. 

\noindent \textbf{\textcolor{ExpDText}{\textit{Learned importance}}-based Pruning.} These methods instead anchor on attention or visual features. SeGPruner~\cite{segpruner} combines attention-ranked initialization with diversity stage under a joint semantic-spatial metric, making its coverage attention-anchored. Geo3DPruner~\cite{li2026geometry} relies on attention and introduces a large \texttt{VGGT}~\cite{vggt} encoder, re-training the backbone. VisPruner~\cite{vispruner} ranks tokens by text-visual attention and encoder features. CoVeR differs from prior works on several axes (Table~\ref{tab:pruning_signal}): selection is purely geometric, using no attention, encoder features, or semantic similarity; it is training-free and deterministic; and it yields exact per-scene budgets that current \textcolor{ExpCText}{voxelization}-based pruners cannot guarantee.

%% file: sec/3_method.tex
\section{Methodology}
\subsection{Problem Formulation}
\label{sec:problem_formulation}
We aim to design a selector that, for a budget $B$, returns $B$ tokens whose 3D locations represent the whole observed scene, using geometry alone and no learned signals. The budget must hold on every scene rather than a dataset average, and coverage must be an explicit objective rather than a by-product of ranking. Given posed RGB-D views, a frozen visual encoder produces $M$ patch tokens, each back-projected to a world coordinate $\mathbf{t}_i \in \mathbb{R}^3$. 
Let $X=\{\mathbf{t}_i\}_{i=1}^M$ denote the token locations and let $\mathcal{C} \subseteq \{1,..., M\}$, $|\mathcal{C}|=B$ index the selected tokens. For a point $\mathbf{t}$ and an index set $\mathcal{A}$, we define $\delta(\mathbf{t;\mathcal{A}})=\min_{j\in\mathcal{A}}||\mathbf{t}-\mathbf{t}_j||_2$. We ask: given exactly $B$ retained tokens, how well do they cover the observed scene? We measure this with the directed Hausdorff distance,
\begin{equation}
    d_H(X, \mathcal{C}) = \max_{\mathbf{t} \in X} \delta(\mathbf{t};\mathcal{C})
\label{eq:hausdorff}
\end{equation}
i.e., the distance from the worst-covered token to its nearest retained token. Intuitively, $d_H$ asks how far the most under-represented part of the scene sits from anything retained, so a small value certifies that no region is dropped outright, which is precisely the guarantee \textcolor{ExpDText}{\textit{learned importance}} does not offer. Therefore, we aim
\begin{equation}
    \mathcal{C^*}=\operatorname*{arg\,min}_{\mathcal{C}\subseteq\{1,...,M\}, |\mathcal{C}|=B} d_H(X,\mathcal{C})
\end{equation}
the discrete Euclidean $k$-center problem.

\subsection{Why Voxelization Is Insufficient}
\label{sec:voxel_limits}
A natural route to the $k$-center objective is \textcolor{ExpCText}{voxelization}: at voxel size $v_s$, token $i$ has index $\mathbf{v}_i=\lfloor\mathbf{t}_i/v_s\rfloor$, occupied voxel
$k$ holds $\mathcal{V}_k=\{i\mid\mathbf{v}_i=k\}$, and keeping one token per occupied voxel returns $G(v_s;X)$ tokens, where $G(v_s;X)$ is the occupied-voxel count. This maps repeated cross-view observations to the same voxel, but its output is governed indirectly by $v_s$ rather than by a token count, which creates two problems.

\noindent\textbf{No exact per-scene budget.} Meeting budget $B$ requires $G(v_s;X)=B$, yet occupancy depends on scene geometry, extent, and view overlap, so one $v_s$ under- or overshoots $B$ across scenes. At a fixed $v_s$=0.2~m and $B$=1342, $56\%$ of scenes fall below budget and $44\%$ exceed it (Fig.~\ref{fig:motivation}(c)). VTC inherits this variability, and DTC matches retention only on average through tuning; neither guarantees a per-scene memory or latency limit.

\noindent\textbf{Voxel occupancy saturates at practical resolutions.} Reducing $v_s$ raises occupancy only up to a point: many tokens are near-duplicate observations of the same surface from different views, so past a certain resolution smaller voxels no longer separate them. About $31\%$ of ScanQA and SQA3D tokens spatially overlap (Fig.~\ref{fig:motivation}(a)), so achievable retention plateaus at $\approx$69\% near $v_s$=0.02~m (Fig.~\ref{fig:motivation}(d)); highly redundant scenes saturate at $46.2$--$57\%$ (Fig.~\ref{fig:motivation}(b)). Within the practical range, \textcolor{ExpCText}{voxelization} alone therefore cannot reach arbitrary budgets, motivating our method.

\subsection{Overview}
\label{sec:overview}
Fig.~\ref{fig:teasor} and Algo.~\ref{alg:proposedmethod} summarize CoVeR. \emph{Coverage initialization} (Sec.~\ref{sec:stage1}) adaptively voxelizes a scene and keeps an original token per occupied voxel, removing cross-view duplicates into a coarse scene-wide spatial cover. \textit{Coverage expansion} (Sec~\ref{sec:stage2}), then repeatedly adds the token farthest from the current selection, extending coverage to regions underrepresented by \textcolor{ExpCText}{voxelization}, until exactly $B$ tokens remain. The two stages are complementary. \textcolor{ExpCText}{Voxelization} is cheap and spreads tokens over the whole scene at once, but its output count is capped by resolution; farthest point sampling (FPS)~\citep{fps} can place a variable number of tokens, but builds a spread slowly. 

\noindent\textbf{Budget split.} A single hyperparameter $\alpha\in(0,1)$ sets the stage 1 target $B_{\mathrm{init}}=\max(1,\lfloor\alpha B\rfloor)$; stage 2 then supplies the remaining $B_{\mathrm{expan}}=B-|\mathcal{C}_{\mathrm{init}}|$ tokens. For any feasible budget $B\le M$, three cases ensure exactly $B$ output tokens: (1) if $|\mathcal{C}_{\mathrm{init}}|<B$, stage 2 adds the remaining tokens (i.e., $B_{\mathrm{expan}}\ge1$), this is the regime observed for all evaluated scenes and budgets; (2) if $|\mathcal{C}_{\mathrm{init}}|=B$, stage 2 is empty; and (3) if $|\mathcal{C}_{\mathrm{init}}|>B$, a safeguard (Alg.~\ref{alg:proposedmethod}) retains representatives from the $B$ most populated voxels (which is not activated at any scene or budget with our ratio $\alpha$). Thus, regardless of where the voxel search terminates, the selector always returns exactly $B$ tokens (Fig.~\ref{fig:motivation}(e)).

\noindent \textbf{Integration with a VLM.} CoVeR is a plug-in module that selects token indices $\mathcal{C}$ right after the visual encoder. All visual tokens still pass through the projector, and after projection, only the features indexed by $\mathcal{C}$ reach the LLM, kept in original sequence order. The native token layout is left intact with pruned tokens simply removed, making CoVeR compatible across diverse VLMs whose projectors preserve token-wise correspondence.

\begin{algorithm}[h]
\caption{Coverage-based 3D Token Pruning}
\label{alg:proposedmethod}
\kwb{Inputs:} Token coordinates $X=\{\mathbf{t}_i\}_{i=1}^{M}$, budget $B$, ratio $\alpha$\\
\kwb{Constants:} $\tau=0.05$, $T=16$, $[v_{\min},v_{\max}]=[0.02,5.0]$\\
\kwb{Output:} Selected token set $\mathcal{C}$, $|\mathcal{C}|=B$
\begin{algorithmic}[1]
\setlength{\itemsep}{-0.25pt}
\State $B_{\mathrm{init}} \gets \max(1,\lfloor\alpha B\rfloor)$
\Statex \textcolor{teal}{\kwb{Stage 1: Coverage initialization}}
\State $v_{\mathrm{lo}}\gets v_{\min},\;v_{\mathrm{hi}}\gets v_{\max}$
\For{$t=1,\ldots,T$}
    \State $v_s\gets(v_{\mathrm{lo}}+v_{\mathrm{hi}})/2$
    \State $G\gets|\{\lfloor\mathbf{t}_i/v_s\rfloor\}_{i=1}^{M}|$
    \State \textbf{if} $(1-\tau)B_{\mathrm{init}}\!\leq\! G\!\leq\!(1+\tau)B_{\mathrm{init}}$ \textbf{then break}
    \State $v_{\mathrm{lo}}\gets v_s$ \textbf{if} $G>(1+\tau)B_{\mathrm{init}}$ \textbf{else} $v_{\mathrm{hi}}\gets v_s$
\EndFor
\State $\mathcal{C}_{\mathrm{init}}\gets$ representative per occupied voxel \kw{\Comment{Eq.~\ref{eq:medoid}}}
\State \textbf{if} $|\mathcal{C}_{\mathrm{init}}|>B$ \textbf{then}
       $\mathcal{C}_{\mathrm{init}}\gets$ representatives of $B$ most populated voxels
       \kw{\Comment{budget safeguard}}
\Statex \textcolor{teal}{\kwb{Stage 2: Coverage expansion}}
\State $B_{\mathrm{expan}}\gets B-|\mathcal{C}_{\mathrm{init}}|,\quad \mathcal{C}_{\mathrm{expan}}\gets\emptyset$
\State $d_{\min}(m)\gets\min_{s\in\mathcal{C}_{\mathrm{init}}}\|\mathbf{t}_m-\mathbf{t}_s\|_2^2\ \ \forall m\notin\mathcal{C}_{\mathrm{init}};\quad d_{\min}(s)\gets -1\ \ \forall s\in\mathcal{C}_{\mathrm{init}}$
\For{$j=1,\ldots,B_{\mathrm{expan}}$}
    \State $m^*\gets\arg\max_m d_{\min}(m)$
    \State $\mathcal{C}_{\mathrm{expan}}\gets\mathcal{C}_{\mathrm{expan}}\cup\{m^*\},\quad d_{\min}(m^*)\gets -1$
    \State $d_{\min}(m)\gets\min(d_{\min}(m),\|\mathbf{t}_m-\mathbf{t}_{m^*}\|_2^2)\ \ \forall m$ 
\EndFor
\State \Return $\mathcal{C}\gets\mathcal{C}_{\mathrm{init}}\cup\mathcal{C}_{\mathrm{expan}}$
\end{algorithmic}
\end{algorithm}
\vspace{-0.5em}

\subsection{Coverage Initialization}
\label{sec:stage1}
Stage 1 constructs a coarse scene-wide cover. As $G(v_{s}; X)$ varies with scene geometry, CoVeR estimates $v_{s}$ per scene rather than transferring one global value across the dataset.

\noindent\textbf{Adaptive voxel size via heuristic interval search.}
Because voxel grids at different sizes are not nested, $G(v_s;X)$ is not guaranteed to be monotonic in $v_s$. However, decreasing $v_s$ tends to increase $G(v_s;X)$ (Fig.~\ref{fig:motivation}(d)). Motivated by this observation, we use binary search 
as a heuristic over an interval $[v_{\text{lo}}, v_{\text{hi}}]$, \footnote{Bounds $[v_{\min}, v_{\max}]=[0.02, 5.0]$ are fixed once by the scale of indoor scenes: below $0.02$~m occupancy no longer grows (Fig.~\ref{fig:motivation}(d)), and $5$~m yields below $0.1\%$ retention, so the interval spans the full practical range. Any $T \ge 8$ is sufficient over this range.},
raising $v_s$ when too many voxels are produced and lowering it when too few are, until $G$ lands within a tolerance $\tau$ of $B_{\text{init}}$ or $T$ iterations are reached. Because the search is heuristic, the terminal occupancy may fall on
either side of $B_{\text{init}}$, the budget safeguard (Alg.~\ref{alg:proposedmethod}) 
makes the final selection independent of this outcome. 

\noindent\textbf{Representative token per voxel.} To ensure one token per occupied voxel, we retain the token nearest the mean of the others in its voxel, which sits near the geometric center of the voxel's occupancy and is the most representative proxy. A boundary token would risk placing neighboring voxels' representatives close together, undermining uniformity:
\begin{equation}
i_k^{\text{rep}} = \arg\min_{i \in \mathcal{V}_k} \left\| \mathbf{t}_i - \frac{1}{|\mathcal{V}_k|-1}
\sum_{\substack{j \in \mathcal{V}_k,\, j \neq i}} \mathbf{t}_j \right\|_2
\label{eq:medoid}
\end{equation}
The output of this stage is the selected set:
\begin{equation} 
|\mathcal{C}_{init}| = \min\!\left(G(v_s;X),\, B\right)
\label{eq:stage1set}
\end{equation}
\subsection{Coverage Expansion}
\label{sec:stage2}
Since $G(v_{s};X)$ is capped by the number of distinct token locations, stage 1 alone cannot reach the full budget. Stage 2 lifts the ceiling by expanding $\mathcal{C}_{\text{init}}$ with an expansion set $\mathcal{C}_{\text{expan}}$ of size $B_{\text{expan}}$ via farthest point sampling (FPS)~\citep{fps}, an inherently coverage-seeking procedure. FPS iteratively builds the expansion set by selecting candidate tokens that are maximally distant from all currently selected tokens. Starting from an empty set $\mathcal{C}_{\text{expan}}$, it treats running selection as $\mathcal{C} \;=\; \mathcal{C}_{init} \cup \mathcal{C}_{\text{expan}}$. At each iteration, new token $s^{*}$ maximizes its spatial distance to nearest token already in $\mathcal{C}$:
\begin{equation}
m^{*} = \arg\max_{m \notin \mathcal{C}} \left( \min_{s \in \mathcal{C}} \mathcal{D}(\mathbf{t}_m, \mathbf{t}_s) \right)
\end{equation}
Intuitively, this adds each new token $m^{*}$ to $\mathcal{C}_{\text{expan}}$, i.e., wherever the scene is currently least covered, and repeats until the budget is met, ensuring $|\mathcal{C}| = B$.

\noindent\textbf{Voxel-initialized expansion.} Because $\mathcal{C}$ is initialized with $\mathcal{C}_{\text{init}}$, the first expansion step measures distance to the already covered regions, allowing FPS to select tokens in uncovered areas rather than re-covering regions explored in stage 1. Concretely, for every unselected token $m \notin \mathcal{C}_{\text{init}}$, the minimum distance to the initial set is computed as $d_{\min}(m) = \min_{s \in \mathcal{C}_{init}} \mathcal{D}(\mathbf{t}_m, \mathbf{t}_s) \ \forall \, m \notin \mathcal{C}_{init}$.
At each step, the token with the largest $d_{\min}$ is added to $\mathcal{C}_{\text{expan}}$ and the distances are updated against the new token, for $B_{\text{expan}}$ steps, yielding $|C|=B$. 

\noindent\textbf{Distance metric.}
We use the squared Euclidean distance, i.e., $\mathcal{D}_{\text{spatial}}(\mathbf{t}_i, \mathbf{t}_j) \;=\; \|\mathbf{t}_i -\mathbf{t}_j\|_2^2$ between the 3D coordinates as the FPS metric.

\subsection{Coverage Objective}
Stage 2 always returns $B$ tokens for any feasible budget $B\le M$; this budget guarantee does not depend on the voxel
search. Coverage admits a bound in the common case where the safeguard is inactive, i.e. $|\mathcal{C}_{\mathrm{init}}|=G(v_s;X)\le B$: every unselected
token then shares a voxel of side $v_s$ with a selected one and lies within its space diagonal, yielding
$d_H(X,\mathcal{C}_{\mathrm{init}}) \le \sqrt{3}\,v_s$.
Each FPS step then selects the token attaining the inner $\max$--$\min$ of Eq.~\ref{eq:hausdorff}, so $d_H(X,\mathcal{C})$ is non-increasing through stage 2 (the squared Euclidean distance leaves the selection order unchanged) and the bound carries to the final selection. When the safeguard is active, the exact budget guarantee still holds, but this particular bound may no longer apply; full proofs are in the Appendix.

%% file: sec/4_experiment.tex
\section{Experiments and Results}

\textbf{Benchmarks and Metrics.} We evaluate all three forms of 3D reasoning: ScanQA~\cite{scanqa} for 3D spatial understanding, SQA3D~\cite{sqa} for situated reasoning grounded in an agent's position, and OpenEQA~\cite{openeqa} for open-vocabulary embodied QA. Together, they probe object recognition, attributes, counting, localization, and spatial relations. Following prior work~\cite{scanqa, openeqa, dtc, segpruner, 3dllm, llava3d, li2026geometry}, we report EM@1, CIDEr~\cite{cider}, and ROUGE-L~\cite{rouge} for ScanQA; EM@1 for SQA3D; and LLM-Match for OpenEQA. Efficiency is measured by inference and pruning time (s), LLM TFLOPs, KV-cache (MB), and peak GPU memory (GB).

\noindent\textbf{Models and protocol.} We test CoVeR with four VLMs: \texttt{LLaVA-OV-7B}~\cite{llava-onevision}, \texttt{Video-3D LLM}~\citep{video3dllm}, \texttt{Qwen2.5-VL-7B}~\cite{qwen2.5}, and \texttt{Qwen3-VL-8B}~\citep{qwen3}. Following~\cite{dtc, segpruner}, we uniformly sample $12$ views, evaluate prior work retention ratios, and compare with VTC~\cite{dtc}, DTC~\cite{dtc}, VisPruner~\cite{vispruner}, and SeGPruner~\cite{segpruner} under matched inputs. For Geo3DPruner~\cite{li2026geometry}, we follow its protocol with \texttt{Video-3D LLM} using 16 views. \footnote{Official code for VTC/ DTC and Geo3DPruner are not publicly available; we follow their reported protocol for a direct comparison.} CoVeR uses $\alpha = 0.4$ throughout; all experiments run on one NVIDIA H100 GPU. Additional details are in the Appendix.

\subsection{Main Results}
CoVeR achieves the best aggregate performance at every token budget (Table~\ref{tab:main_comparison}). Averaging across three datasets, CoVeR retains 93.5\% of full-token performance at 8\% token retention, versus 89.6\% and 85.9\% for SeGPruner and VisPruner. On ScanQA, it improves over the full-token baseline at 23\% retention, reaching 28.5 EM@1 and 85.5 CIDEr, while at 9\% budget it achieves 27.1 EM@1, 81.4 CIDEr, and 41.5 ROUGE-L, substantially outperforming SeGPruner and VisPruner. It also reaches 48.6 EM@1 on SQA3D at 8\% retention and outperforms prior pruning methods on OpenEQA at aggressive budgets. We further report category-level OpenEQA results and comparisons in the Appendix. Fig.~\ref{fig:qual_comp} shows CoVeR spreading tokens across all chairs and answering correctly, while VisPruner and SeGPruner cluster on a few patches and undercount.

\begin{table}[h]
\centering
\caption{\textbf{Performance Comparison}. CoVeR compared to~\textcolor{ExpCText}{\textit{Voxelization}} and \textcolor{ExpDText}{\textit{Learned Importance}} methods on 12-view ScanQA~\cite{scanqa}, OpenEQA~\cite{openeqa}, and SQA3D~\cite{sqa}. \textbf{Avg.} is over benchmarks, while \textbf{Rel.} is avg. \% of performance maintained. Higher is better.
}
\label{tab:main_comparison} 
\renewcommand{\arraystretch}{0.95}
\setlength{\tabcolsep}{1.25pt}
\resizebox{1.02\linewidth}{!}{
\begin{tabular}{lccccclr}
\toprule
\multirow{2}{*}{Methods} & \multicolumn{3}{c}{ScanQA} & OpenEQA & SQA3D & \multirow{2}{*}{Avg.} & \multirow{2}{*}{Rel.} \\
\cmidrule(lr){2-4}
\cmidrule(lr){5-5}
\cmidrule(lr){6-6}
& {\small EM@1} & {\small CIDEr} & {\small ROUGE-L} & {\small L-Match} & {\small EM@1} & & \\
\midrule

\multicolumn{8}{c}{Retain 100\% Tokens} \\
\midrule
\texttt{LLaVA-OV-7B} & 28.2 & 83.6 & 42.6 & 59.1 & 51.7 & 54.1 & 100.0 \\

\midrule
& \multicolumn{3}{c}{Retain 54\% Tokens}
& \multicolumn{2}{c}{Retain 56\% Tokens}
& & \\
\midrule
\textcolor{ExpCText}{DTC}~\cite{dtc} & 27.8 & -- & -- & -- & -- & -- & -- \\
\textcolor{ExpDText}{VisPruner}~\cite{vispruner} & 27.7 & 80.5 & 41.3 & 59.1 & 50.9 & 53.3 & 98.5 \\
\textcolor{ExpDText}{SeGPruner}~\cite{segpruner} & 28.5 & 83.8 & 42.6 & \cellcolor{tablecolor}\textbf{58.9} & \cellcolor{tablecolor}\textbf{51.7} & 54.1 & 100.0 \\
\textbf{CoVeR (Ours)} & \cellcolor{tablecolor}\textbf{28.7} & \cellcolor{tablecolor}\textbf{85.0} & \cellcolor{tablecolor}\textbf{43.2} & \cellcolor{tablecolor}\textbf{58.9} & \cellcolor{tablecolor}\textbf{51.7} & \cellcolor{tablecolor}\textbf{54.3} & \cellcolor{tablecolor}\textbf{100.4} \\

\midrule
& \multicolumn{3}{c}{Retain 40\% Tokens}
& \multicolumn{2}{c}{Retain 43\% Tokens}
& & \\
\midrule
\textcolor{ExpCText}{DTC}~\cite{dtc} & 27.7 & -- & -- & -- & -- & -- & -- \\
\textcolor{ExpDText}{VisPruner}~\cite{vispruner} & 28.0 & 80.3 & 41.4 & 58.4 & 51.0 & 53.1 & 98.3 \\
\textcolor{ExpDText}{SeGPruner}~\cite{segpruner} & 28.2 & 81.9 & 42.0 & 58.0 & 51.5 & 53.4 & 98.9 \\
\textbf{CoVeR (Ours)} & \cellcolor{tablecolor}\textbf{28.9} & \cellcolor{tablecolor}\textbf{85.5} & \cellcolor{tablecolor}\textbf{43.4} & \cellcolor{tablecolor}\textbf{58.6} & \cellcolor{tablecolor}\textbf{51.7} & \cellcolor{tablecolor}\textbf{54.3} & \cellcolor{tablecolor}\textbf{100.5} \\

\midrule
& \multicolumn{3}{c}{Retain 23\% Tokens}
& \multicolumn{2}{c}{Retain 26\% Tokens}
& & \\
\midrule
\textcolor{ExpCText}{DTC}~\cite{dtc} & 27.7 & -- & -- & -- & -- & -- & -- \\
\textcolor{ExpDText}{VisPruner}~\cite{vispruner} & 26.9 & 77.1 & 40.1 & 57.1 & 49.5 & 51.5 & 95.4 \\
\textcolor{ExpDText}{SeGPruner}~\cite{segpruner} & 27.7 & 78.9 & 40.7 & 57.5 & 50.6 & 52.4 & 97.1 \\
\textbf{CoVeR (Ours)} & \cellcolor{tablecolor}\textbf{28.5} & \cellcolor{tablecolor}\textbf{85.5} & \cellcolor{tablecolor}\textbf{43.2} & \cellcolor{tablecolor}\textbf{57.7} & \cellcolor{tablecolor}\textbf{51.7} & \cellcolor{tablecolor}\textbf{53.9} & \cellcolor{tablecolor}\textbf{99.7} \\

\midrule
& \multicolumn{3}{c}{Retain 14\% Tokens}
& \multicolumn{2}{c}{Retain 17\% Tokens}
& & \\
\midrule
\textcolor{ExpCText}{DTC}~\cite{dtc} & 26.7 & -- & -- & -- & -- & -- & -- \\
\textcolor{ExpDText}{VisPruner}~\cite{vispruner} & 24.8 & 71.7 & 37.5 & 55.9 & 49.0 & 49.9 & 92.2 \\
\textcolor{ExpDText}{SeGPruner}~\cite{segpruner} & 26.4 & 75.2 & 38.9 & 56.0 & 49.7 & 50.8 & 94.2 \\
\textbf{CoVeR (Ours)} & \cellcolor{tablecolor}\textbf{27.9} & \cellcolor{tablecolor}\textbf{82.4} & \cellcolor{tablecolor}\textbf{42.2} & \cellcolor{tablecolor}\textbf{56.8} & \cellcolor{tablecolor}\textbf{51.2} & \cellcolor{tablecolor}\textbf{52.9} & \cellcolor{tablecolor}\textbf{98.0} \\

\midrule
& \multicolumn{3}{c}{Retain 9\% Tokens}
& \multicolumn{2}{c}{Retain 8\% Tokens}
& & \\
\midrule
\textcolor{ExpCText}{DTC}~\cite{dtc} & 26.1 & -- & -- & -- & 48.0 & -- & -- \\
\textcolor{ExpDText}{VisPruner}~\cite{vispruner} & 23.4 & 66.9 & 35.6 & 51.5 & 45.7 & 46.4 & 85.9 \\
\textcolor{ExpDText}{SeGPruner}~\cite{segpruner} & 24.5 & 71.2 & 37.0 & 52.5 & 48.4 & 48.4 & 89.6 \\
\textbf{CoVeR (Ours)} & \cellcolor{tablecolor}\textbf{27.1} & \cellcolor{tablecolor}\textbf{81.4} & \cellcolor{tablecolor}\textbf{41.5} & \cellcolor{tablecolor}\textbf{53.0} & \cellcolor{tablecolor}\textbf{48.6} & \cellcolor{tablecolor}\textbf{50.5} & \cellcolor{tablecolor}\textbf{93.5} \\
\bottomrule
\end{tabular}
}
\end{table}

\begin{figure}[h]
\centering
\includegraphics[width=1.0\linewidth]{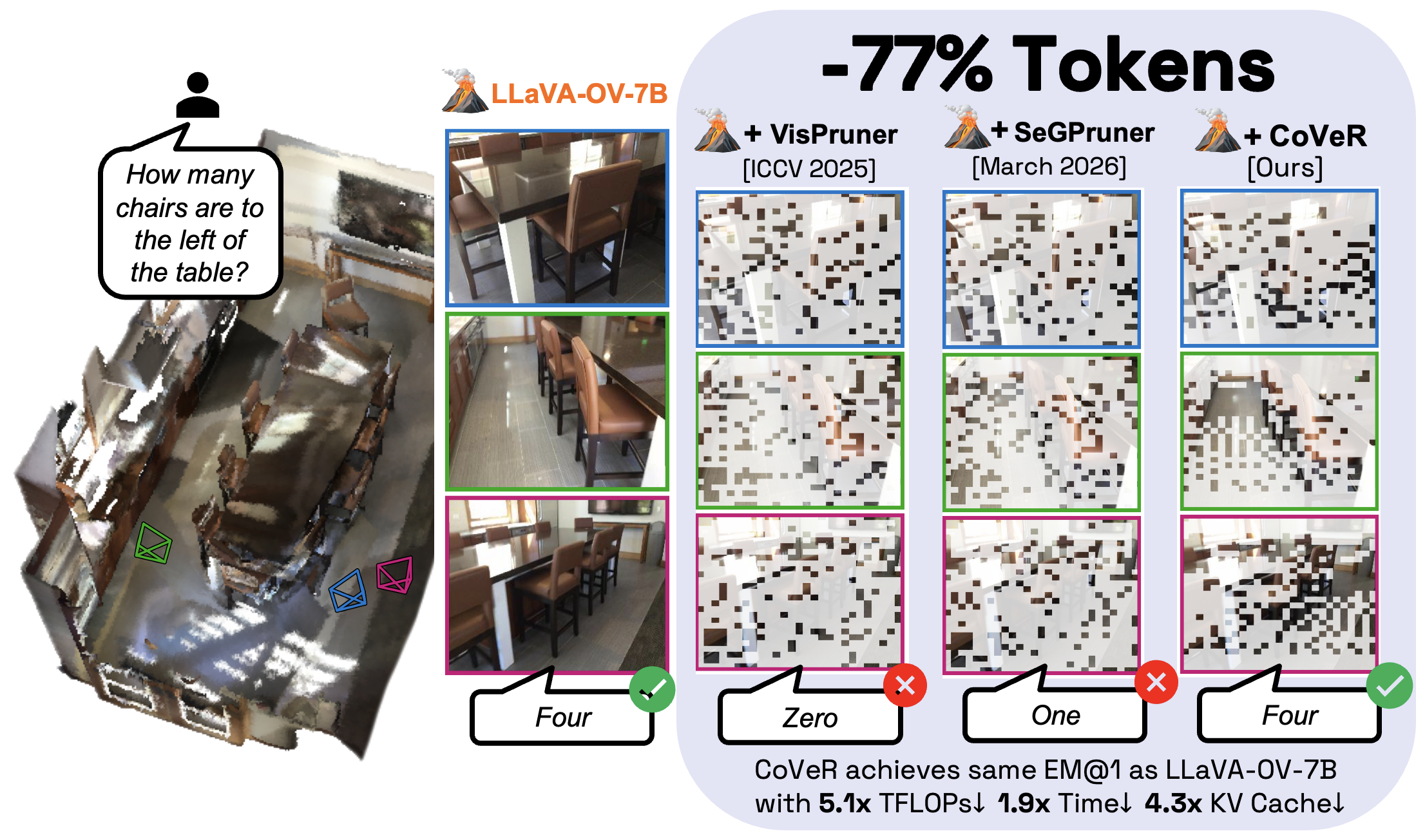}
\caption{\textbf{Qualitative comparisons.} CoVeR provides correct answers while substantially reducing the computational cost.}
\vspace{-2mm}
\label{fig:qual_comp}
\end{figure}

\subsection{Geometric Coverage Analysis}
\noindent \textbf{Does CoVeR improve Geometric Coverage?} Accuracy alone does not reveal how a pruning method should spend its target budget. We therefore measure coverage at the most aggressive ScanQA budget with two metrics. The \textit{Nearest Neighbor Index} (NNI)~\cite{nni} measures how evenly the retained tokens are spread: it is the ratio of their mean nearest-neighbor distance to the value expected under a spatially random process, so a low value flags the clustering we want to avoid. A spread-out set can still miss whole regions, so the \textit{Nearest Neighbor Distance} (NND$_q$) measures coverage of the scene directly: one minus the $q$-th percentile of every original token's distance to its nearest selected token, normalized by the scene diagonal. We report NND$_{95}$ for robustness and NND$_{100}$, the worst case, which equals the normalized complement of the directed Hausdorff distance minimized by CoVeR (Eq.~\ref{eq:hausdorff}). Higher indicates better coverage. Details are included in the Appendix.

As shown in Table~\ref{tab:spatial_main}, SeGPruner concentrates tokens on a few regions ($\mathrm{NNI}=0.458$), while CoVeR is far more uniform ($\mathrm{NNI}=0.924$). CoVeR also covers the scene better on both $\mathrm{NND}_{95}$ ($0.980$ vs.\ $0.967$) and $\mathrm{NND}_{100}$ ($0.977$ vs.\ $0.917$), with the largest gap on the worst case. All gaps are significant under the paired $t$- and Wilcoxon signed-rank tests. CoVeR also attains higher accuracy than SeGPruner, supporting our motivation that, at a fixed budget, spatial coverage beats concentrating tokens on a few regions. The widest gap falls on $\mathrm{NND}_{100}$, the normalized complement of the directed Hausdorff distance CoVeR minimizes, confirming it as the right objective for coverage-based selection.

\begin{table}[h]
\centering
\setlength{\tabcolsep}{3pt}
\caption{\textbf{Coverage and Performance}. CoVeR achieves substantially better spatial coverage with higher downstream 3D scene understanding performance compared to SeGPruner~\cite{segpruner}. Higher~$\uparrow$ is better on all columns.}
\label{tab:spatial_main}
\resizebox{\linewidth}{!}{
\begin{tabular}{l|ccc|ccc}
\toprule
\multirow{2}{*}{Method} & \multicolumn{3}{c|}{Coverage} & \multicolumn{3}{c}{Performance} \\
& NNI & NND$_{95}$ & NND$_{100}$ & EM@1 & CIDEr & ROUGE-L \\
\midrule
\textcolor{ExpDText}{SeGPruner} & 0.458 & 0.967 & 0.917 & 24.5 & 71.2 & 37.0 \\
\textbf{CoVeR} & \cellcolor{tablecolor}\textbf{0.924} & \cellcolor{tablecolor}\textbf{0.980} & \cellcolor{tablecolor}\textbf{0.977} & \cellcolor{tablecolor}\textbf{27.1} & \cellcolor{tablecolor}\textbf{81.4} & \cellcolor{tablecolor}\textbf{41.5} \\
\bottomrule
\end{tabular}}
\end{table}

\noindent \textbf{Does CoVeR preserve informative regions?} A geometry-only selector raises a concern: broad coverage might come from visually uninformative regions while discarding answer-critical evidence. We therefore compare CoVeR and SeGPruner selections directly at 9\% ScanQA retention with Token Recovery (TR) and Token Expansion (TE). TR measures the normalized distance from each SeGPruner token to its nearest CoVeR token, while TE is the reverse direction; both are defined in the supplementary materials. A small TR means CoVeR keeps a token near every region SeGPruner selects, while a large TE means CoVeR also covers regions SeGPruner ignores.
\vspace{0.25em}

\begin{figure}
  \centering
  \includegraphics[width=0.79\linewidth]{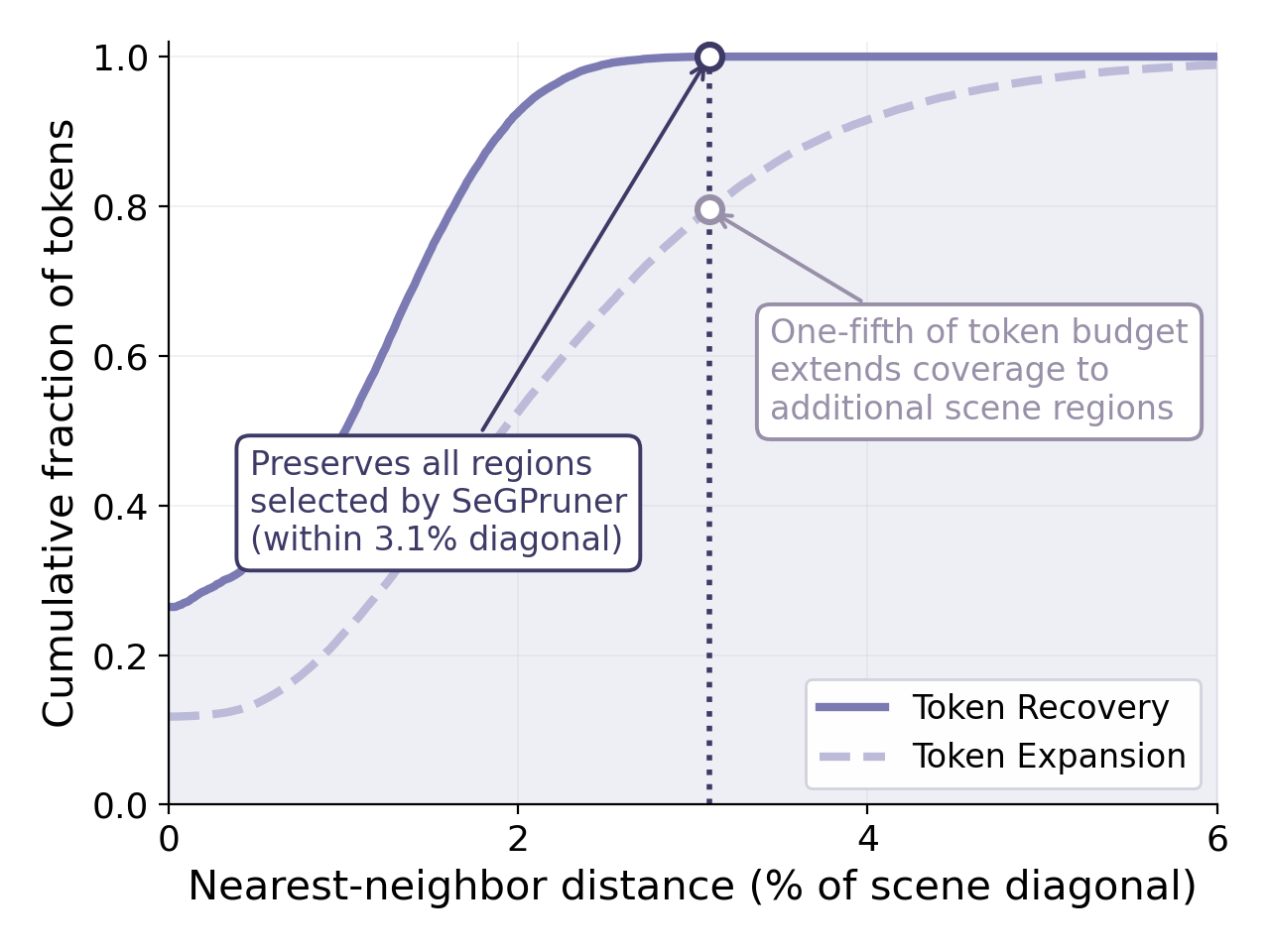}
  \caption{\textbf{Cumulative distributions of directed nearest-neighbors}. CoVeR preserves SeGPruner's selected regions within $3.1\%$ of scene diagonal while covering $\approx$20\% additional regions.}
  \label{fig:recovery_expansion}
\end{figure}

\begin{table}[h]
\centering
\setlength{\tabcolsep}{2pt}
\caption{\textbf{Directed distance}. CoVeR remains close to regions selected by SeGPruner while expanding into additional regions.}
\label{tab:spatial_rel}
\resizebox{0.60\linewidth}{!}{
\begin{tabular}{l|cc}
\toprule
\textbf{Comparison} & TR $\downarrow$ & TE $\uparrow$ \\
\midrule
\textbf{CoVeR} vs. \textcolor{ExpDText}{SeGPruner\ } & \cellcolor{tablecolor}\textbf{\ 0.009\ } & \cellcolor{tablecolor}\textbf{\ 0.020\ } \\
\bottomrule
\end{tabular}}
\vspace{-1em}
\end{table}

\noindent Table~\ref{tab:spatial_rel} shows TR is small ($0.009$ of the scene diagonal) and TE about twice as large ($0.020$), so CoVeR's tokens stay close to SeGPruner's while the reverse does not hold. The cumulative distributions (Fig.~\ref{fig:recovery_expansion}) make this sharper: every SeGPruner token lies within $3.1\%$ of the scene diagonal of a CoVeR token, while roughly $20\%$ of CoVeR tokens remain farther than that from any SeGPruner token. Thus, coverage preserves salient regions while additionally covering the rest of the scene.

\begin{callout}{\textbf{Insight 1:}}
{Broader spatial coverage accompanies stronger 3D reasoning. Prioritizing coverage does not exclude informative regions but instead preserves them, extending coverage to additional parts of the scene.}
\end{callout}

\subsection{Efficiency and Generalization}
\noindent \textbf{Efficiency.} Relative to \texttt{LLaVA-OV-7B} (Table~\ref{tab:efficiency_tradeoff}), CoVeR delivers substantial efficiency gains while largely preserving performance. At the most aggressive 9\% budget, it achieves $13.3\times$ fewer TFLOPs, $10.7\times$ smaller KV cache, and a $2.9\times$ speedup for a 1.1-point drop in performance. Against the \textcolor{ExpDText}{\textit{learned importance}} pruners, CoVeR also pairs the highest performance with the lowest peak GPU memory (Table~\ref{tab:efficiency_comparision}): they must run the encoder with attention outputs enabled and hold the attention maps to rank tokens, whereas CoVeR selects from 3D coordinates alone, with only the negligible overhead of a binary search plus FPS.

\noindent\textbf{Additional Backbones.} Under Geo3DPruner's~\cite{li2026geometry} 16-view, 10\% retention protocol on \texttt{Video-3D LLM}~\cite{video3dllm}, CoVeR obtains 26.5 ScanQA EM@1 versus 26.0 for Geo3DPruner, and retains 93.5\% of full performance versus 90.7\% (Table~\ref{tab:video3dllm}), even though Geo3DPruner adds a \texttt{VGGT-1B} encoder~\cite{vggt} and fully retrains the backbone. Without changing the selection rule or $\alpha$, CoVeR also transfers to \texttt{Qwen2.5-VL-7B} and \texttt{Qwen3-VL-8B}, which differ substantially in visual encoders, tokenization, and resolution handling. Both retain over 96\% of ScanQA performance at retention levels above 20\% (Fig.~\ref{fig:qwen_scores}(a)) and over 95\% on SQA3D until retention falls below 20\% (Fig.~\ref{fig:qwen_scores}(b)), matching trends across architectures.

\begin{table}[h]
\renewcommand{\arraystretch}{1.2}
\setlength{\tabcolsep}{2pt}
\centering
\caption{\textbf{Efficiency at varying token retention.} `Pruning' denotes the average time which CoVeR needs to select tokens, while `Time' reports end-to-end inference latency (in sec). CoVeR substantially reduces TFLOPs, KV cache (MB), memory (GB) while remaining competitive across token budgets. Results are relative to \texttt{LLaVA-OV-7B} on ScanQA.}
\label{tab:efficiency_tradeoff}
\resizebox{\linewidth}{!}{
\begin{tabular}{lccccccc}
\toprule
\multirow{2.5}{*}{\shortstack[l]{Tokens\\Retained}}
& \multicolumn{5}{c}{Efficiency}
& \multicolumn{2}{c}{Accuracy} \\
\cmidrule(lr){2-6}\cmidrule(lr){7-8}
& Pruning\,$\downarrow$
& Time\,$\downarrow$
& TFLOPs\,$\downarrow$
& KV\,$\downarrow$
& Mem\,$\downarrow$
& EM@1\,$\uparrow$
& \boldmath$\Delta$ \\
\midrule
100\% & -- & 0.497 & 145.5 & 480.0 & 24.1 & 28.2 & -- \\
\addlinespace[1pt]
54\% & 0.189 & 0.493\imp{1.0} & 71.1\imp{2.0}  & 259.9\imp{1.8} & 20.3\imp{1.2} & 28.7 & \dgood{+0.5} \\
40\% & 0.141 & 0.388\imp{1.3} & 51.1\imp{2.8}  & 192.9\imp{2.5} & 19.2\imp{1.3} & 28.9 & \dgood{+0.7} \\
23\% & 0.082 & 0.268\imp{1.9} & 28.3\imp{5.1}  & 111.7\imp{4.3} & 17.8\imp{1.4} & 28.5 & \dgood{+0.3} \\
14\% & 0.049 & 0.202\imp{2.5} & 17.0\imp{8.6}  & 68.6\imp{7.0}  & \cellcolor{tablecolor}\textbf{17.2}\imp{1.4} & 27.9 & \dbad{-0.3} \\
9\%  & \cellcolor{tablecolor}\textbf{0.034} & \cellcolor{tablecolor}\textbf{0.174}\imp{2.9} & \cellcolor{tablecolor}\textbf{10.9}\imp{13.3} & \cellcolor{tablecolor}\textbf{44.7}\imp{10.7} & \cellcolor{tablecolor}\textbf{17.2}\imp{1.4} & 27.1 & \dbad{-1.1} \\
\bottomrule
\end{tabular}}
\vspace{-0.5em}
\end{table}

\begin{table}[h]
    \vspace{-0.5em}
    \renewcommand{\arraystretch}{1.2}
    \setlength{\tabcolsep}{2pt}
    \centering
    \caption{\textbf{Efficiency comparison.} CoVeR attains the highest accuracy at the lowest peak memory. Its selection cost remains a fraction of end-to-end inference. Scores report pruning on \texttt{LLaVA-OV-7B} at 9\% token retention for ScanQA.}
    \resizebox{\linewidth}{!}{
    \begin{tabular}{lccccc}
        \toprule
        Methods & EM@1\,$\uparrow$ & CIDEr\,$\uparrow$  & ROUGE-L\,$\uparrow$ & Pruning (s)\,$\downarrow$ & Mem (GB)\,$\downarrow$ \\
        \midrule
        \textcolor{ExpDText}{VisPruner}~\cite{vispruner} & 23.4 & 66.9 & 35.6 & 0.010 & 22.1\\ 
        \textcolor{ExpDText}{SeGPruner}~\cite{segpruner} & 24.5 & 71.2 & 37.0 & \cellcolor{tablecolor}\textbf{0.008} & 22.1\\
        \textbf{CoVeR} & \cellcolor{tablecolor}\textbf{27.1} & \cellcolor{tablecolor}\textbf{81.4} & \cellcolor{tablecolor}\textbf{41.5} & 0.034 & \cellcolor{tablecolor}\textbf{17.2}\\
        \bottomrule
    \end{tabular}}
    \vspace{-0.75em}
    \label{tab:efficiency_comparision}
\end{table}

\begin{table}[h]
\centering
\caption{\textbf{Generalization with \texttt{Video-3D LLM}~\cite{video3dllm} backbone}. Geo3DPruner~\citep{li2026geometry} introduces a \texttt{VGGT-1B}~\citep{vggt} encoder and requires full backbone retraining, while CoVeR is training-free. Scores are EM@1\,$\uparrow$ on 16-view at 10\% budget.}
\renewcommand{\arraystretch}{1.15}
\setlength{\tabcolsep}{3pt}
\label{tab:video3dllm}
\resizebox{\linewidth}{!}{
\begin{tabular}{llcccccc}
\toprule
\multirow{2}{*}{Methods} & \multirow{2}{*}{Encoder} & \multirow{2}{*}{Retrain} & \multicolumn{2}{c}{ScanQA} & \multicolumn{2}{c}{SQA3D}  & \multirow{2}{*}{Rel.(\%)} \\
\cmidrule(lr){4-5} \cmidrule(lr){6-7}
& & & 100\% & 10\% & 100\% & 10\% & \\
\midrule
\textcolor{ExpDText}{Geo3DPruner}~\cite{li2026geometry} & \texttt{VGGT1B} & Full & 29.7 & 26.0 & 59.3 & 55.7 &  90.7 \\
\textbf{CoVeR} & None & None & 28.9 & 26.5 & 57.9 & 55.1 & \cellcolor{tablecolor}\textbf{93.5} \\
\bottomrule
\end{tabular}}
\vspace{-0.5em}
\end{table}

\begin{figure}[h]
\begin{center}
\includegraphics[width=\linewidth]{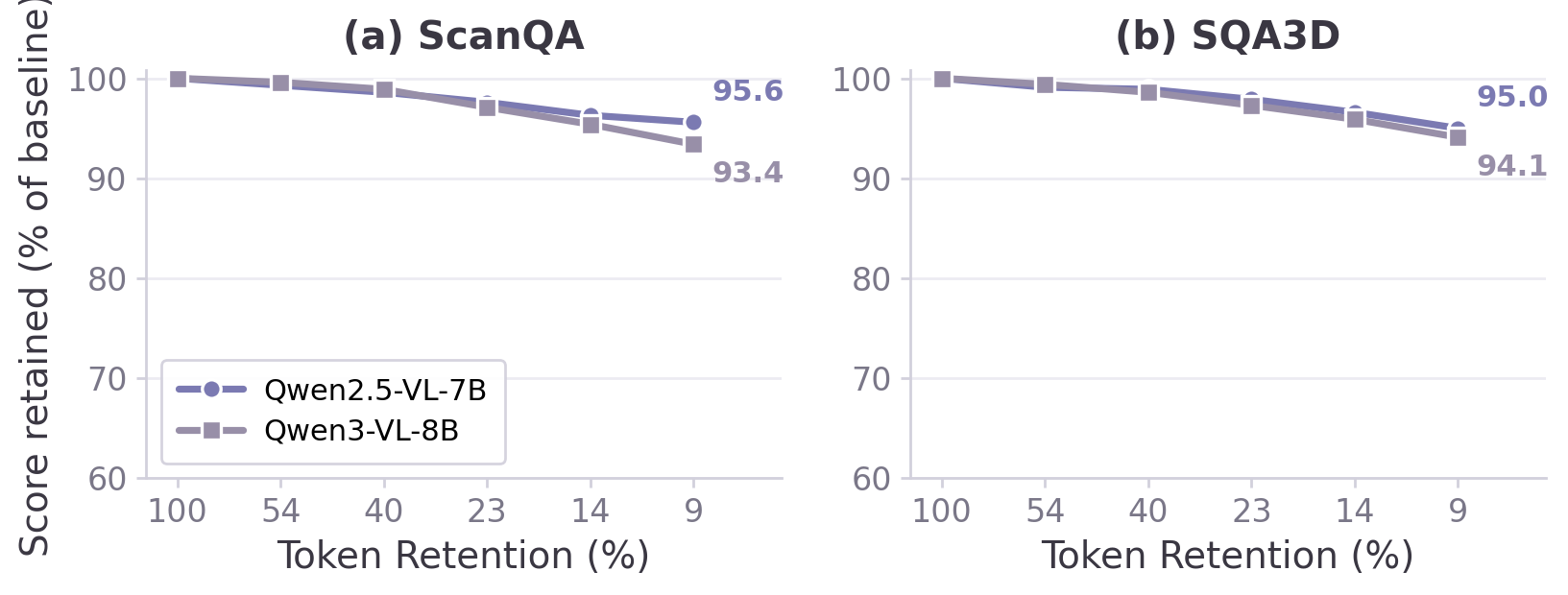}
\caption{\textbf{CoVeR transfers across VLMs.} CoVeR exhibits similar performance on \texttt{Qwen2.5-VL-7B} and \texttt{Qwen3-VL-8B}. Performance remains above 93\% of the baseline even at 9\% token retention for both (a) ScanQA and (b) SQA3D.}
\vspace{-2em}
\label{fig:qwen_scores}
\end{center}
\end{figure}

\subsection{Design Ablations}
We ablate CoVeR one component at a time. Full details have been provided in the Appendix.
\vspace{0.25em}

\noindent \textbf{Stage 1: Preserving encoder-native tokens.} Fig.~\ref{fig:voxel_ablation} shows that the advantage of keeping a real token (pruning) over averaging features within a voxel (merging) widens with stronger compression: at the tightest budget, pruning improves EM@1 by 7.7/18.1 on ScanQA/SQA3D, with the largest SQA3D gains on \texttt{Can} (+37) and \texttt{Which} (+29), which need spatial grounding. Coarse voxels mix objects, surfaces, and viewpoints, so averaging yields a synthetic feature, whereas CoVeR keeps an encoder-native token with a valid spatial identity. CoVeR also beats DTC~\cite{dtc} at every budget, so the gain is not merely voxel-size selection.

\begin{callout}{\textbf{Insight 2.}}
{Original encoder tokens are easier for LLM to interpret than synthetic avg. of heterogeneous regions.}
\end{callout}

\begin{figure}[h]
\begin{center}
\includegraphics[width=\linewidth]{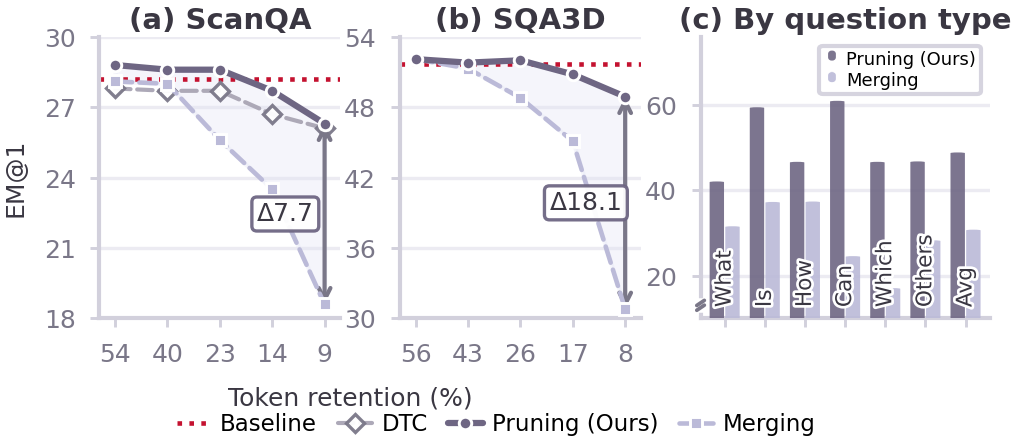}
    \vspace{-1em}
    \caption{\textbf{Coverage initialization analysis} ($\alpha$ = 1). Pruning consistently outperforms merging, with gap widening at tight budgets.}
    \vspace{-1em}
\label{fig:voxel_ablation}
\end{center}
\end{figure}

\noindent\textbf{Stage 2: Geometric distance.} Fig.~\ref{fig:fps_ablation} shows pure 3D distance is consistently strongest across budgets. At the tightest budget, it beats
spatial+semantic FPS by 1.7/2.1 on ScanQA/SQA3D and semantic-only FPS by 2.9/4.3, with the largest SQA3D gains on \texttt{How} (+6.8) and \texttt{What} (+6.2), which require counting or localizing multiple objects. Semantic FPS suppresses distinct objects with similar embeddings, while spatial distance keeps them when 3D locations differ. Without attention or features, it still matches or exceeds SeGPruner~\cite{segpruner}.
\begin{callout}{\textbf{Insight 3.}}
{Spatial distance preserves visually similar instances at different locations, while semantic distance can incorrectly suppress them.}
\end{callout}

\begin{figure}[h]
\begin{center}
\includegraphics[width=\linewidth]{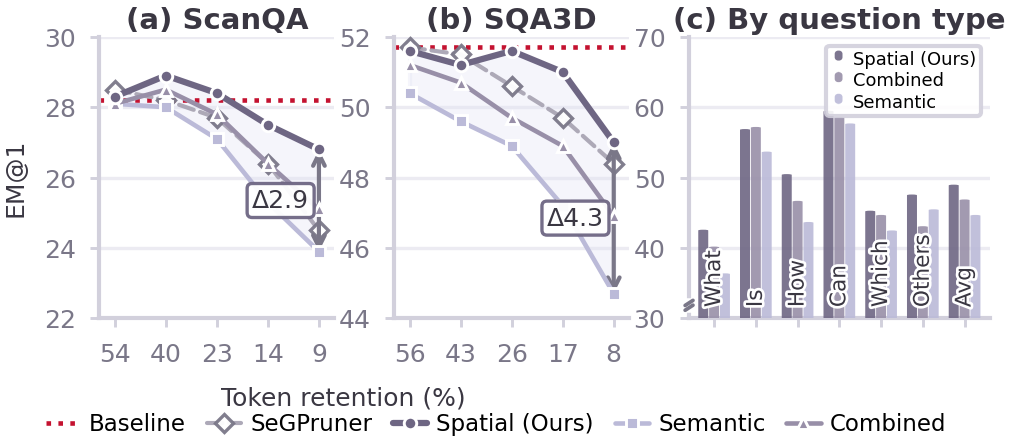}
\caption{\textbf{Coverage expansion analysis} ($\alpha$ = 0). Spatial distance outperforms combined and semantic alternatives, with gap widening at tight budgets.}
\vspace{-1em}
\label{fig:fps_ablation}
\end{center}
\end{figure}

\vspace{1em}
\noindent\textbf{CoVeR ablation.} Table~\ref{tab:ablation_combined}
ablates the expansion rule and its initialization. 
Our iterative FPS updates the minimum distance after each pick; we compare it against \textit{Top-$K$ selection} (selecting tokens farthest from the stage 1 tokens without distance updates) and \textit{Random selection} (uniform sampling). Both are weaker: at 9\% retention they lose 0.7 and 1.7 average ScanQA points, since only iterative updates keep placing tokens in still-uncovered regions. For initialization, seeding expansion from the stage 1 tokens rather than \textit{From scratch} adds 1.1, 0.2, and 0.5 points at 23\%, 14\%, and 9\% retention, because voxel seeding removes duplicates in parallel first, so expansion extends into new regions instead of rediscovering covered ones.

\noindent\textbf{Coverage initialization is necessary.} Table~\ref{tab:ablation_combined} isolates the contribution of stage~1 by comparing CoVeR with FPS only design. Without voxel initialization, FPS must build scene coverage sequentially from scratch.

\begin{table}[h]
\centering
\caption{\textbf{Design ablations.} Iterative FPS with stage 1 initialization consistently performs best. Results on ScanQA, averaged over EM@1, CIDEr, and ROUGE-L. Pruning time is reported in sec.}
\label{tab:ablation_combined}
{\resizebox{\linewidth}{!}{
\begin{tabular}{llccc}
\toprule
\multirow{2}{*}{Component} &
\multirow{2}{*}{Setting} &
\multicolumn{3}{c}{Token Budget} \\
\cmidrule(lr){3-5}
& & 23\% & 14\% & 9\% \\
\midrule

\multirow{3}{*}{Expansion}
& w/o Iterative FPS (Top-$K$)
& 51.8
& 50.4
& 49.3 \\
& w/o Iterative FPS (Random)
& 51.8
& 50.6
& 48.3 \\
& \textbf{w/ Iterative FPS}
& \cellcolor{tablecolor}\textbf{52.4}
& \cellcolor{tablecolor}\textbf{50.8}
& \cellcolor{tablecolor}\textbf{50.0} \\
\midrule

\multirow{2}{*}{Initialization}
& w/o Stage 1 seed (From scratch)
& 51.3
& 50.6
& 49.5 \\
& \textbf{w/ Stage 1 seed}
& \cellcolor{tablecolor}\textbf{52.4}
& \cellcolor{tablecolor}\textbf{50.8}
& \cellcolor{tablecolor}\textbf{50.0} \\
\midrule
\multirow{2}{*}{Pruning time}
& w/o Stage 1 (FPS only) & 0.126 & 0.078 & 0.049 \\
& \textbf{w/ Stage 1 (CoVeR)} & \cellcolor{tablecolor}\textbf{0.082} & \cellcolor{tablecolor}\textbf{0.049} & \cellcolor{tablecolor}\textbf{0.034} \\
\bottomrule
\end{tabular}}}
\end{table}

\noindent Seeding FPS with one representative per occupied voxel instead provides a coarse coverage from the start, which reduces pruning time by approximately $1.5\times$. Across all variants above, higher coverage (NNI, NND$_{95}$, NND$_{100}$) accompanies higher accuracy, with full CoVeR best on every measure (Fig.~\ref{fig:coverage_ablation}). Weaker variants leave coherent regions uncovered (top-$K$, random) or revisit initialized voxels (from-scratch FPS). This consistent ranking supports coverage as the mechanism behind the downstream gains.

\begin{figure}[h]
\centering
\includegraphics[width=\linewidth]{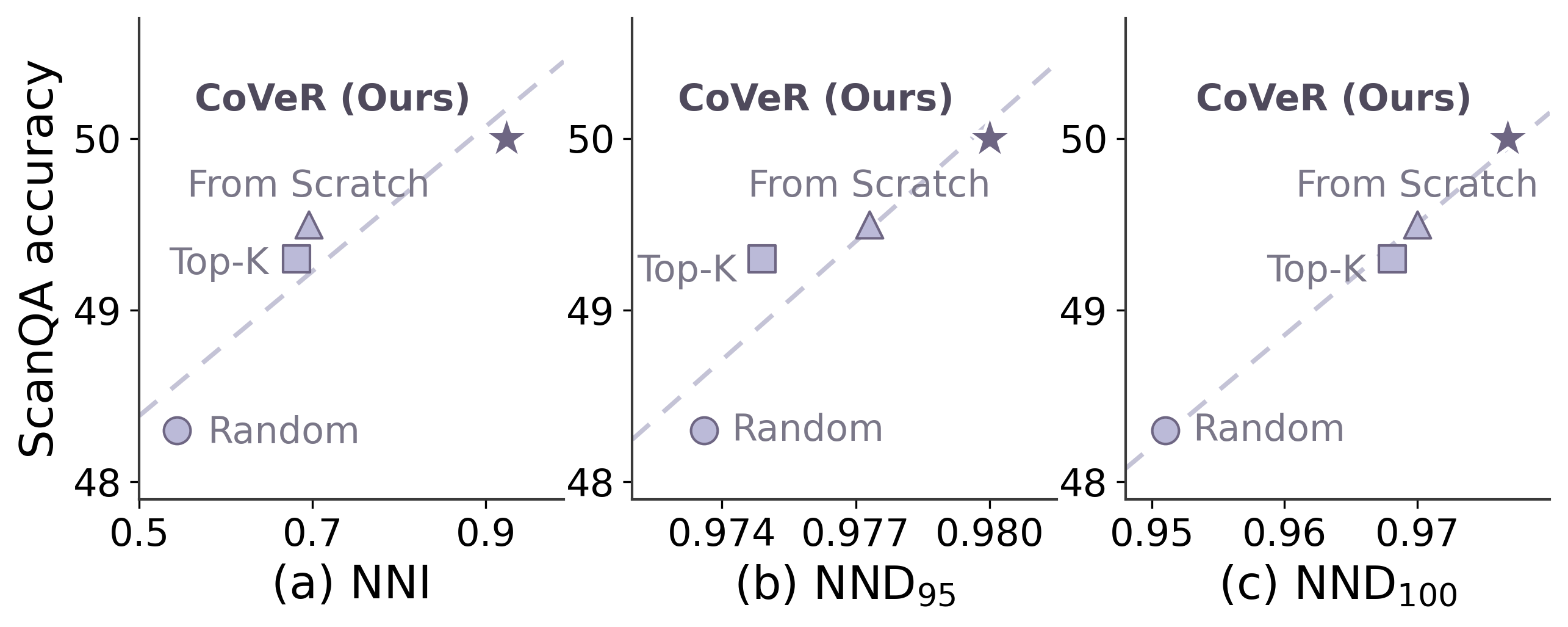}
\caption{\textbf{Better coverage achieves higher performance.} Correlation between coverage and accuracy (on ScanQA at 9\% retention).}
\vspace{-0.5em}
\label{fig:coverage_ablation}
\end{figure}

\noindent\textbf{Voxel ratio.} Varying $\alpha$ from 0.1 to 0.9 changes accuracy within a narrow band (Fig.~\ref{fig:alpha_study}), so the split between the two stages is not narrowly tuned; we use $\alpha=0.4$ throughout.

\noindent\textbf{Scaling with views.} CoVeR leads VisPruner and SeGPruner at every view count and gains more from added views (Fig.~\ref{fig:teasor} and Fig.~\ref{fig:view_scaling}), because coverage reallocates the fixed budget to newly visible regions rather than repeatedly observed regions favored by attention.

\begin{figure}[h]
    \centering
    \vspace{-0.5em}
    \begin{subfigure}[t]{0.49\linewidth}
        \centering
        \includegraphics[width=\linewidth]{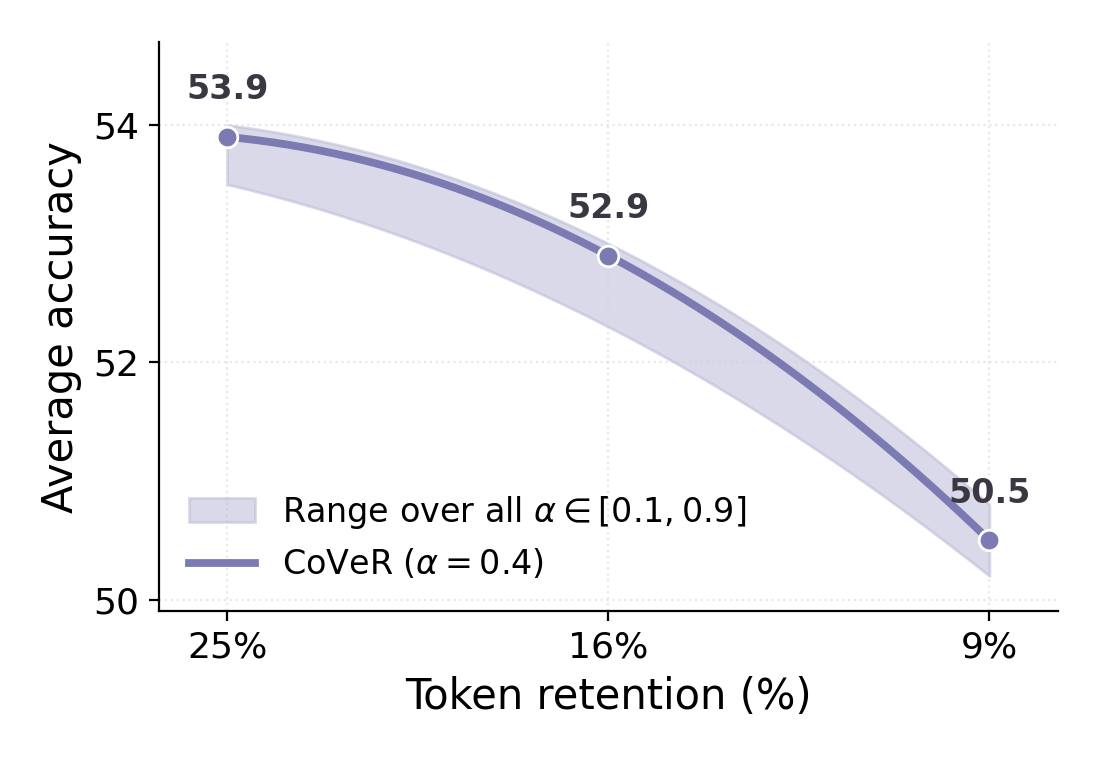}
        \caption{\textbf{Voxel ratio.} Avg. accuracy across ScanQA, SQA3D, OpenEQA remains stable across budgets.}
        \label{fig:alpha_study}
    \end{subfigure}
    \hfill
    \begin{subfigure}[t]{0.49\linewidth}
        \centering
        \includegraphics[width=\linewidth]{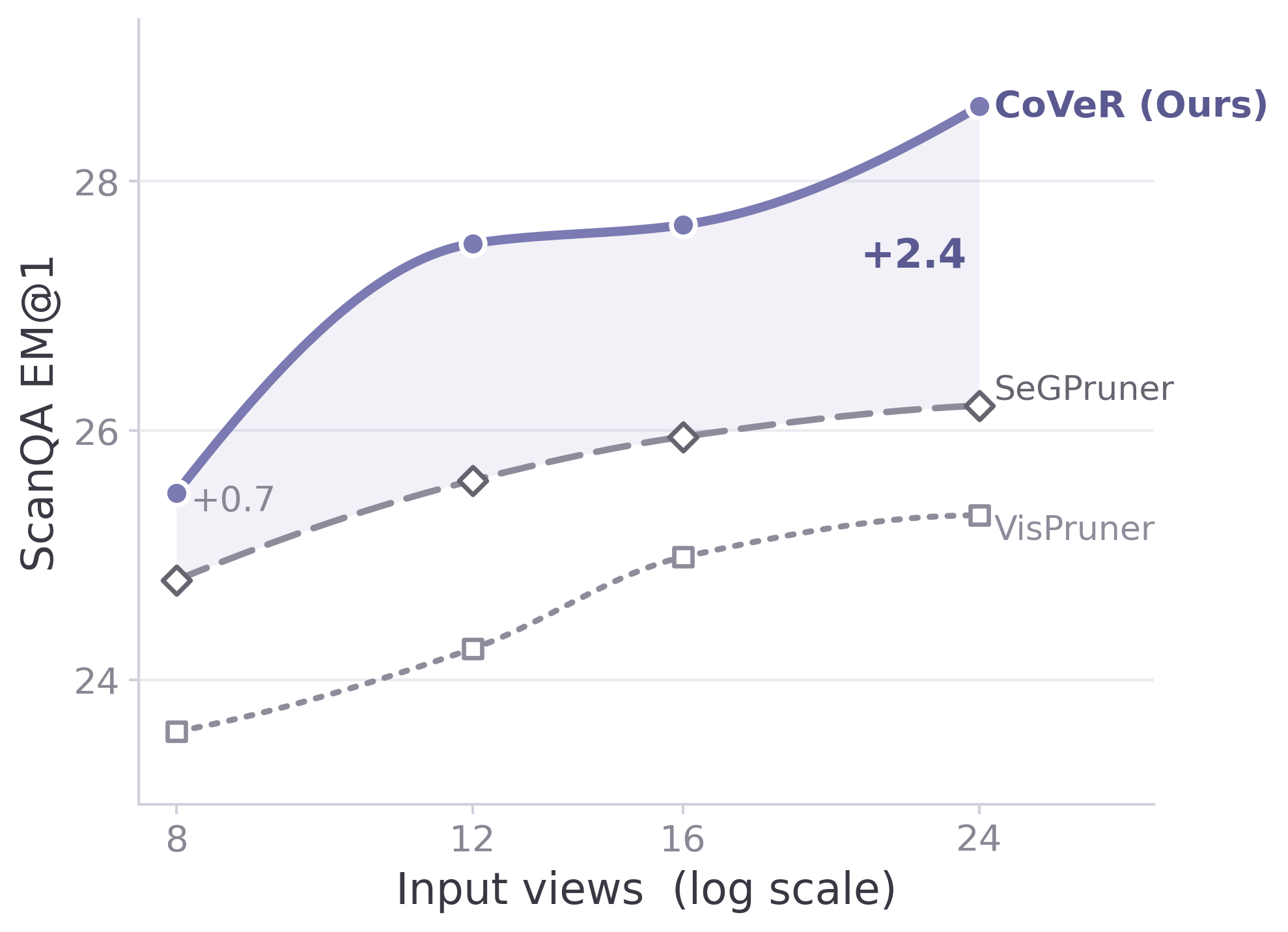}
        \caption{\textbf{Effect of View count.} CoVeR improves with more views, while VisPruner and SeGPruner saturate.}
        \label{fig:view_scaling}
    \end{subfigure}
    \caption{\textbf{Ablation for voxel ratio and scaling view count.}}
    \label{fig:additional_analysis}
    \vspace{-1em}
\end{figure}

%% file: sec/5_conclusion.tex
\section{Conclusion}

We propose CoVeR, a training-free, deterministic, geometry-only framework for reducing the visual token count of multi-view 2D VLMs. Our analysis identifies key limitations in each prior family: \textcolor{ExpCText}{\textit{voxelization}}-based methods cannot enforce exact per-scene budgets and are capped by voxel saturation, while \textcolor{ExpDText}{\textit{learned importance}}-based methods concentrate on prominent regions rather than scene coverage. CoVeR addresses both: it searches per scene for the voxel size that yields an initial selection, then extends beyond the saturation plateau using spatial distance alone. It achieves SOTA on three benchmarks while generalizing across four VLMs from two model families without retraining. 
\vspace{0.25em}

\noindent \textbf{Limitations.} Like all prior methods, CoVeR requires depth and camera, and is designed for indoor scenes. Its performance may therefore depend on the quality of the estimated geometry. Future work can combine coverage with reliable depth/pose estimation, and hierarchical or streaming selection for outdoor scenes.

\section*{Acknowledgments}

The authors thank Saswat Subhajyoti Mallick, Surgan Jandial, Nicholas Mesa-Cucalon, and Yinong Oliver Wang for their insightful discussions, feedback, and assistance with parts of the project. Aviral Chharia was supported in part by the Uber Presidential Fellowship from the Robotics Institute, Carnegie Mellon University. The computational resources were supported in part by PSC Bridges-2 through the Advanced Cyberinfrastructure Coordination Ecosystem: Services and Support (ACCESS) program allocation CIS250962, which is supported by National Science Foundation (NSF) grants \#2138259, \#2138286, \#2138307, \#2137603, and \#2138296.\\
\vspace{-1em}

%% file: sec/6_appendix.tex
\section{Implementation Details}
\label{sec:experiement_details}
\subsection{Benchmarks}
To demonstrate generalization across diverse 3D scene understanding tasks, we evaluate CoVeR on benchmarks covering spatial scene understanding (ScanQA~\cite{scanqa}), situated reasoning (SQA3D~\cite{sqa}), and embodied question answering (OpenEQA~\cite{openeqa}).

ScanQA~\cite{scanqa} and SQA3D~\cite{sqa} are both based on the scenes from the ScanNet~\cite{dai2017scannet} dataset. ScanQA requires models to answer questions related to spatial understanding in a 3D environment, while SQA3D requires models to have situated reasoning awareness, in which agents must determine their position and orientation in order to correctly answer the questions. Following previous works~\cite{dtc, segpruner}, we include results on the validation set of ScanQA and the test set of SQA3D. 

We additionally show results on the OpenEQA benchmark. OpenEQA~\cite{openeqa} is the first open-vocabulary benchmark for embodied question answering (EQA), built on ScanNet~\cite{dai2017scannet} and HM3D~\cite{ramakrishnan2021hm3d} datasets. The benchmark includes seven test aspects: object recognition, attribute recognition, object state recognition, object localization, spatial reasoning, functional reasoning, and world knowledge. Together, these datasets demonstrate that CoVeR generalizes to 3D spatial understanding. Additional details for these datasets are included in Table~\ref{tab:stat_datasets} and~\ref{tab:stat_openeqa}.

\begin{table}[h]
\centering
\caption{Number of questions and scenes in each benchmark.}
\label{tab:stat_datasets}
\resizebox{0.72\linewidth}{!}{\begin{tabular}{l|c|c}
\hline
\textbf{Benchmark} & \textbf{\# of questions} & \textbf{\# of scenes} \\
\hline
ScanQA~\cite{scanqa} & 4,306 & 71 \\
OpenEQA~\cite{openeqa} & 1,636 & 152 \\
SQA3D~\cite{sqa} & 3,519 & 67 \\
\hline
\end{tabular}}
\vspace{-1em}
\end{table}

\begin{table}[h]
\centering
\caption{Number of questions and scenes of ScanNet~\cite{dai2017scannet} and HM3D~\cite{ramakrishnan2021hm3d} subsets in the OpenEQA~\cite{openeqa} benchmark.}
\label{tab:stat_openeqa}
\resizebox{0.7\linewidth}{!}{\begin{tabular}{l|c|c}
\hline
\textbf{Subset} & \textbf{\# of questions} & \textbf{\# of scenes} \\
\hline
ScanNet~\cite{dai2017scannet} & 1,079 & 89 \\
HM3D~\cite{ramakrishnan2021hm3d} & 557 & 63 \\
\hline
\cellcolor{tablecolor} Total & \cellcolor{tablecolor} 1,636 & \cellcolor{tablecolor} 152 \\
\hline
\end{tabular}}
\end{table}

\subsection{Baselines}
To ensure a fair comparison, we reproduce the results using the official implementations of VisPruner~\cite{vispruner} and SeGPruner~\cite{segpruner} in the same environment, with the default importance ratio set to $0.5$ (i.e., half of the selected tokens are chosen by their attention-based importance scores, while the remaining tokens are selected by the second stage of each method).

\subsection{Evaluation Metrics}
\textbf{EM@1} (Exact Match at top-1) is a binary string matching metric. It gives a score of $1$ when the prediction matches the ground truth exactly, and $0$ otherwise.

\noindent\textbf{CIDEr}~\cite{cider} measures similarity between the prediction and multiple reference answers for the same question using Term Frequency Inverse Document Frequency (TF-IDF) weighted $n$-gram matching, where higher scores indicate better alignment with ground truth. By default, $n$-gram is set to $4$.

\noindent\textbf{ROUGE-L}~\cite{rouge} measures the overlap between the prediction and the ground truth using Longest Common Subsequence (LCS), rewarding answers that preserve the word order of the ground truth.

\noindent\textbf{LLM-Match}~\cite{openeqa}, is used to evaluate the open-ended answers from VLMs:
\begin{equation}
    \text{LLM-Match} = \frac{1}{N} \sum_{i}^{N} \frac{\sigma_i - 1}{4} \times 100\% 
\end{equation}
where, $\sigma_i \in \{1,\dots,5\}$ is the score assigned by an LLM (\texttt{GPT-4o}~\citep{gpt4o} or \texttt{GPT-4}~\citep{gpt4} \footnote{As \texttt{GPT-4} is being deprecated: \url{https://developers.openai.com/api/docs/deprecations}, we report both \texttt{GPT-4} (Table~\ref{tab:openeqa_gpt4}) and \texttt{GPT-4o} (Table~\ref{tab:main_comparison} and Table~\ref{tab:openeqa_gpt4o}).}). A score of $1$ indicates an irrelevant answer, while $5$ denotes a correct answer. We follow the official prompt template released by OpenEQA, as shown in Figure~\ref{fig:openeqa_prompt}.

\noindent\textbf{LLM TFLOPs} measures the prefill computational cost of the LLM decoder. For a prompt of length $n$ (including text tokens and retained visual tokens) processed by a $L$-layer transformer decoder, the prefill FLOPs are calculated as:
\begin{equation}
\text{FLOPs} = L\Big(\underbrace{4 n d^{2} + 4\, n\, d\, d_{kv}}_{\text{attention (GQA)}} + \underbrace{4 n^{2} d}_{\text{attention}} + \underbrace{6\, n\, d\, m}_{\text{SwiGLU FFN}} \Big)
\end{equation}
where $d$ is the hidden size, $m$ is the feed-forward network (FFN) dimension, and $d_{kv} = H_{kv}\,(d / H_{q})$ denotes the key/value width under grouped-query attention (GQA), with $H_{q}$ query heads and $H_{kv}$ key/value heads.

\begin{figure}[t]
    \centering
    \begin{tcolorbox}[title=Prompt Template for LLM-Match Scores, fontupper=\small, fonttitle=\footnotesize]
    You are an AI assistant who will help me to evaluate the response given the question, the correct answer, and extra answers that are also correct. To mark a response, you should output a single integer between 1 and 5 (including 1, 5). 5 means that the response perfectly matches the answer or any of the extra answers. 1 means that the response is completely different from the answer and all of the extra answers.
    \\
    
    Example 1:
    
    Question: Is it overcast?
    
    Answer: no
    
    Extra Answers: ['doesn't look like it', 'no',' it's sunny']
    
    Response: yes
    
    Your mark: 1
    \\
    
    Example 2:
    
    Question: Who is standing at the table?
    
    Answer: woman
    
    Extra Answers: ['a woman', 'a lady', 'woman']
        
    Response: Jessica
    
    Your mark: 3
    \\
    
    Example 3:
    
    Question: Are there drapes to the right of the bed?
    
    Answer: yes
    
    Extra Answers: ['yes, there are drapes', 'yeah', 'the drapes are to the right of the king bed']
    
    Response: yes
    
    Your mark: 5
    \\
    
    Your Turn:
    
    Question: \{question\}
    
    Answer: \{answer\}
    
    Extra Answers: \{extra\_answers\}

    Response: \{prediction\}
    \end{tcolorbox}
    \caption{Prompt used for computing LLM-Match scores on the OpenEQA~\cite{openeqa} benchmark.}
    \label{fig:openeqa_prompt}
\end{figure}

\subsection{Backbones}
\textbf{LLaVA-OneVision-7B}~\cite{llava-onevision} consists of a visual encoder (SigLIP~\cite{siglip}), a projection layer (MLP), and a language model (Qwen2~\cite{qwen2llm}). The number of tokens for one input image is $729$ with $384 \times 384$ resolution, resulting in $8{,}748$ visual tokens for a $12$-view setting.

\noindent\textbf{Qwen2.5-VL-7B}~\cite{qwen2.5} contains a native dynamic resolution ViT~\citep{vit} visual encoder, an MLP-based Vision-Language Merger projection layer, and a language model (Qwen2.5). Processing the inputs yields $391$ tokens per image, resulting in $4{,}692$ visual tokens for the $12$-view setting.

\noindent\textbf{Qwen3-VL-8B}~\cite{qwen3} contains a visual encoder (SigLIP2~\cite{siglip2}), a projection layer similar to Qwen2.5-VL~\cite{qwen2.5}, and a language model (Qwen3~\cite{qwen3llm}). Processing the inputs yields $300$ tokens per image, resulting in $3{,}600$ visual tokens for a $12$-view setting.

\noindent\textbf{Video-3D-LLM}~\citep{video3dllm} extends \texttt{LLaVA-Video}~\citep{zhang2025llavavideovideoinstructiontuning} with sinusoidal 3D positional encoding added to the visual tokens to inject 3D camera geometry. Each frame produces 196 tokens, resulting in $3{,}136$ visual tokens for the $16$-frame setting.

\subsection{Packages}
To ensure the reproducibility of our results, we provide the version of all packages in Table~\ref{tab:conda_env}.

\begin{table}[h]
\centering
\caption{Packages version in the Conda environment.}
\label{tab:conda_env}
\begin{tabular}{l|l}
\hline
\textbf{Name} & \textbf{Version} \\
\hline
\texttt{python} & \texttt{3.10.20} \\
\texttt{torch} & \texttt{2.6.0+cu124} \\
\texttt{torchvision} & \texttt{0.21.0+cu124} \\
\texttt{numpy} & \texttt{2.2.6} \\
\texttt{pillow} & \texttt{12.1.1} \\
\texttt{transformers} & \texttt{5.8.0.dev0} \\
\texttt{tokenizers} & \texttt{0.22.2} \\
\texttt{accelerate} & \texttt{1.13.0} \\
\texttt{safetensors} & \texttt{0.7.0} \\
\texttt{huggingface-hub} & \texttt{1.13.0} \\
\texttt{flash-attn} & \texttt{2.8.3} \\
\texttt{qwen-vl-utils} & \texttt{0.0.14} \\
\hline
\end{tabular}
\end{table}

\subsection{Coverage Metrics}
\noindent\textbf{Coverage metrics.} 
For a point $\mathbf{t}$ and a non-empty index
set $\mathcal{A}$, let $\delta(\mathbf{t};\mathcal{A})=\min_{j\in\mathcal{A}}\|\mathbf{t}-\mathbf{t}_j\|_2$ be the distance from $\mathbf{t}$ to its nearest token in $\mathcal{A}$. We quantify how well the selected set $\mathcal{C}$ covers the full-token set $X$ with two metrics.

\begin{enumerate} 
\item \textit{Nearest Neighbor Index (NNI)} is the ratio of the observed mean nearest-neighbor distance among selected tokens to that expected under complete spatial randomness. As tokens lie in 3D, we use a 3D homogeneous Poisson process as the baseline:
\begin{equation}
\begin{aligned}
     \mathrm{NNI} = \frac{r_A}{r_E}, \qquad r_A &= \frac{1}{N} \sum_{i=1}^{N} \min_{j \neq i} \|\mathbf{t}_i-\mathbf{t}_j\|_2, \\
    r_E &= \Gamma\!\left(\frac{4}{3}\right) \left( \frac{4\pi}{3}\lambda \right)^{-1/3} 
\end{aligned}
\end{equation}
where $r_A$ is the observed mean nearest neighbor distance among the selected tokens, $r_E$ is its expectation under the Poisson process, and $\lambda = N/V$ is the density of $N$ selected tokens in a scene of bounding-box volume $V$. $\Gamma (.)$ denotes the Gamma function.
\item \textit{Nearest Neighbor Distance (NND)} measures scene coverage directly, since a spread-out set can still miss whole regions. For each original token we take its distance to the nearest selected token, and summarize by a percentile, normalized by the scene diagonal:
\begin{equation}
\begin{split}
\mathrm{NND}_{q} &= 1 - \frac{Q_{q}\big(\{\delta(\mathbf{x};\mathcal{C})\}_{\mathbf{x}\in X}\big)}{\mathrm{diag}}, \\
&\text{where } 
\begin{cases}
  q = 95: & \text{95th percentile} \\
  q = 100: & \max_{\mathbf{x}\in X}\delta(\mathbf{x};\mathcal{C}) = d_H(X,\mathcal{C})
\end{cases}
\end{split}
\end{equation}
where $Q_q$ is the $q$-th percentile over $X$. We report $\mathrm{NND}_{95}$, which clips the worst $5\%$ for robustness, and $\mathrm{NND}_{100}$, the worst case, whose unnormalized numerator equals the directed Hausdorff distance $d_H(X,\mathcal{C})=\max_{\mathbf{t}\in X}\delta(\mathbf{t};\mathcal{C})$ that CoVeR minimizes. Higher is better for both.
\end{enumerate}
\noindent\textbf{Directed distances.} To verify that coverage gains do not drop the salient regions chosen by \textcolor{ExpDText}{\textit{learned importance}}, we compare two selections $\mathcal{C}$ and $\mathcal{S}$ in both directions, normalized by the scene diagonal.
\begin{enumerate}
    \item \textit{Token Recovery (TR)} checks whether $\mathcal{C}$ keeps the regions $\mathcal{S}$ selects: for each token in $\mathcal{S}$ we take $\delta$ to the nearest token in $\mathcal{C}$, then average. A small TR means $\mathcal{C}$ has a token near every region $\mathcal{S}$ selects.
\begin{equation}
\begin{aligned}
\mathrm{TR}&=\frac{1}{|\mathcal{S}|}\sum_{\mathbf{s}\in\mathcal{S}}\frac{\delta(\mathbf{s};\mathcal{C})}{\mathrm{diag}}, \\
\mathrm{diag}
&= \sqrt{\sum_{d \in \{x,y,z\}}
\left(c_d^{\max} - c_d^{\min}\right)^2}
\end{aligned}
\end{equation}
where `diag' is the diagonal of the scene's axis-aligned bounding box.
    \item \textit{Token Expansion (TE)}  checks whether $\mathcal{C}$ reaches regions $\mathcal{S}$ ignores: for each token in $\mathcal{C}$ we take $\delta$ to the nearest token in $\mathcal{S}$, then average. A large TE means $\mathcal{C}$ covers regions $\mathcal{S}$ leaves out.
\begin{equation}
\mathrm{TE}=\frac{1}{|\mathcal{C}|}\sum_{\mathbf{c}\in\mathcal{C}}\frac{\delta(\mathbf{c};\mathcal{S})}{\mathrm{diag}}
\end{equation}
\end{enumerate}

\subsection{Ablation Study}

\noindent \textbf{Stage 1: Preserving encoder-native tokens.} To isolate the representation used in coverage initialization, we set $\alpha=1$ so stage 1 alone targets the full budget $B_{\text{init}}=B$; the highest budget in this study (54\%) stays within the saturation range of all ScanNet scenes (Sec.~\ref{sec:voxel_limits}). For a fair comparison, we modify the search to ensure exactly $B$ tokens for both strategies: rather than allowing $G(v_s;X)<B$ near saturation, we take the smallest voxel size with $G(v_s;X)\ge B$ and, if more than $B$ voxels form, keep the $B$ densest. This is used only for this ablation. We define:
\begin{enumerate} 
\item \textit{Pruning}, which keeps the representative token of each voxel (Eq.~\ref{eq:medoid}), so the voxel's output feature is a real encoder feature:
\begin{equation} 
\mathbf{f}_{\mathcal{V}_k} = \mathbf{f}_{i_k^{\text{rep}}}
\label{eq:pruning}
\end{equation}
\item \textit{Merging}, following VTC~\citep{dtc}, which averages all token features in the voxel into a pooled representation:
\begin{equation}
\mathbf{f}_{\mathcal{V}_k}=\frac{1}{|\mathcal{V}_k|}\sum_{i\in\mathcal{V}_k}\mathbf{f}_i
\label{eq:merging}
\end{equation}
\end{enumerate}
Both yield one feature per occupied voxel on the same voxel partition; they differ only in whether that feature is a real token (\textit{pruning}) or a synthetic average (\textit{merging}).

\noindent\textbf{Stage 2: Geometric distance.} To isolate expansion from stage 1, we set $\alpha=0$, so the stage 2 targets the full budget $B_{\text{expan}}=B$. For tokens $i,j$ with coordinates $\mathbf{t}_i,\mathbf{t}_j$ and $\ell_2$-normalized features $\hat{\mathbf{f}}_i,\hat{\mathbf{f}}_j$, we define three distances:

\begin{enumerate} 
\item \textit{Spatial}: squared Euclidean distance between 3D coordinates, $\mathcal{D}_{\text{spatial}}=\|\mathbf{t}_i-\mathbf{t}_j\|_2^2$.
\item \textit{Semantic}: cosine distance between features,
$\mathcal{D}_{\text{semantic}}=1-\hat{\mathbf{f}}_i^{\top}\hat{\mathbf{f}}_j$.
\item \textit{Combined}, following SeGPruner~\citep{segpruner}, which fuses both. As the terms differ in scale, we propose the normalization for each term: the spatial term by the squared scene diagonal and the semantic term by the cosine range:
\begin{equation}
\mathcal{D}_{\text{combined}}= \frac{\mathcal{D}_{\text{spatial}}}{\mathrm{diag}^2}
+ \frac{\mathcal{D}_{\text{semantic}}}{2}
\label{eq:combined}
\end{equation}
where $c_d^{\max}$ and $c_d^{\min}$ are the scene's max/min coordinates along $d$.
\end{enumerate}

\begin{figure}[h]
  \centering
  \includegraphics[width=\linewidth]{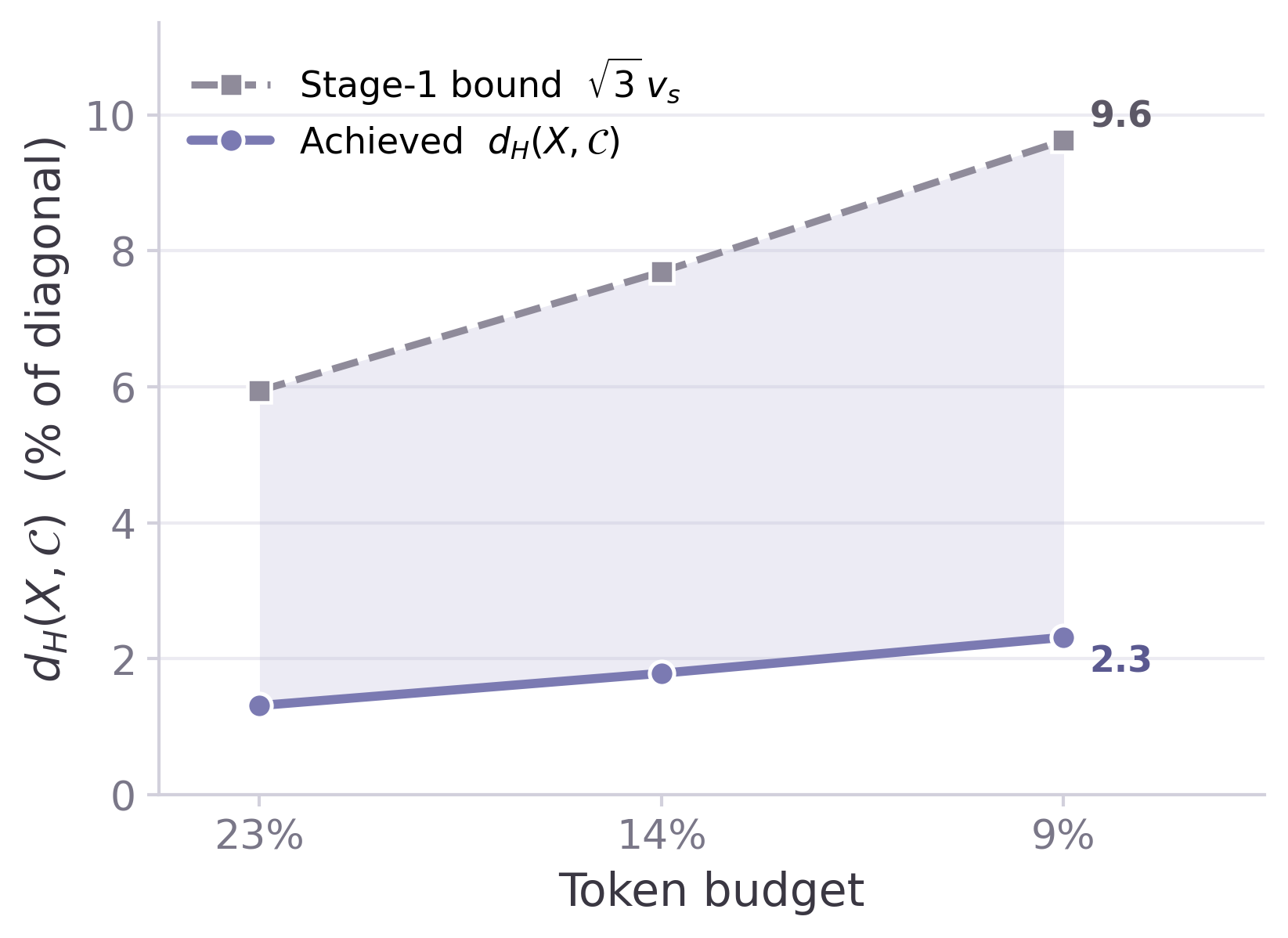}
  \caption{\textbf{Empirical validation of the coverage bound.} Even at 9\% token retention, the  $d_H(X,\mathcal{C})$ is only 2.3\% of the scene diagonal, compared with the stage 1 bound of 9.6\%.}
  \label{fig:appendix_b}
\end{figure}

\section{Theoretical Analysis of CoVeR}
\label{app:theoerical_proof}

\noindent\textbf{Setup.} Let $X=\{\mathbf{t}_i\}_{i=1}^{M}$ be the valid back-projected coordinates (Sec.~\ref{sec:problem_formulation}), and $\mathcal{A}\subseteq\{1,\dots,M\}$ a selection. Recall $\delta(\mathbf{t};\mathcal{A})$ and $d_H(X,\mathcal{A})$ above. We have $\mathcal{C}_{\text{init}}$ for the stage 1 selection at voxel size $v_s$, $m_1^*,\dots,m_{B_{\text{expan}}}^*$ for the stage 2 picks by FPS in order, $\mathcal{C}^j=\mathcal{C}_{\text{init}}\cup\{m_1^*,\dots,m_j^*\}$ (so $\mathcal{C}^0=\mathcal{C}_{\text{init}}$), and $\mathcal{C}=\mathcal{C}^{B_{\text{expan}}}$ for the final selection. We assume $B<M$ and $|\mathcal{C}_{\text{init}}|\le B$
(cases 1--2 of Sec.~\ref{sec:overview}), while case 3 is Remark~2.
\begin{lemma}[Stage 1 bound]
$d_H(X,\mathcal{C}_{\text{init}})\le\sqrt{3}\,v_s$.
\end{lemma}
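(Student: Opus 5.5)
We prove the bound by bounding $\delta(\mathbf{t};\mathcal{C}_{\text{init}})$ for an arbitrary token and then taking the maximum over $X$, following the definition of $d_H$ in Eq.~\ref{eq:hausdorff}. Throughout we are in cases 1--2, where the safeguard is inactive. There $\mathcal{C}_{\text{init}}$ consists of exactly one representative $i_k^{\text{rep}}$ for every occupied voxel $k$ at the terminal voxel size $v_s$ (Eq.~\ref{eq:medoid}, Eq.~\ref{eq:stage1set} with $|\mathcal{C}_{\text{init}}|=G(v_s;X)$).

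\emph{Step 1: pair each token with a representative in its own voxel.} Fix any $\mathbf{t}_i\in X$ and let $k=\mathbf{v}_i=\lfloor \mathbf{t}_i/v_s\rfloor$. Then $i\in\mathcal{V}_k$, so voxel $k$ is occupied. Because the safeguard did not fire, voxel $k$ has a representative $r=i_k^{\text{rep}}\in\mathcal{V}_k\cap\mathcal{C}_{\text{init}}$. The argmin in Eq.~\ref{eq:medoid} ranges over $\mathcal{V}_k$, so $r$ is a genuine member of the voxel and not a synthetic point. If $|\mathcal{V}_k|=1$, the normalizing factor $1/(|\mathcal{V}_k|-1)$ in Eq.~\ref{eq:medoid} is undefined. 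In that case the voxel's only token is taken as its representative, $r=i$, and $\delta(\mathbf{t}_i;\mathcal{C}_{\text{init}})=0$.

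\emph{Step 2: two points in the same voxel are close.} Tokens $i$ and $r$ share the index $k$. Hence, coordinatewise, both $\mathbf{t}_i$ and $\mathbf{t}_r$ lie in the half-open cube $\prod_{d}[k_d v_s,(k_d+1)v_s)$. In each coordinate $d\in\{x,y,z\}$ this gives $|t_{i,d}-t_{r,d}|<v_s$. Therefore $\|\mathbf{t}_i-\mathbf{t}_r\|_2<\sqrt{3}\,v_s$, the space diagonal of the cube.

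\emph{Step 3: conclude.} From Step 2, $\delta(\mathbf{t}_i;\mathcal{C}_{\text{init}})\le\|\mathbf{t}_i-\mathbf{t}_r\|_2\le\sqrt3\,v_s$ for every $i$. Taking the maximum over $X$ gives the lemma. Note that the choice of representative inside the voxel does not matter for this bound; any member of $\mathcal{V}_k$ works.

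The main point needing care is Step 1: guaranteeing that the voxel containing $\mathbf{t}_i$ actually contributes a representative. This is exactly where we use the assumption $|\mathcal{C}_{\text{init}}|\le B$. When the safeguard is active, only the $B$ most populated voxels keep representatives, and Step 1 fails; that case is the one deferred to Remark~2. A second point of care is that $v_s$ must be the voxel size at which $\mathcal{C}_{\text{init}}$ was actually formed, namely the terminal value of the heuristic search. The bound holds for whatever value the search returns, whether or not the search converged.
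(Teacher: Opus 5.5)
Your proof is correct and follows the paper's argument: each $\mathbf{t}_i$ shares an occupied voxel with that voxel's retained representative, both points lie in a cube of side $v_s$ whose diameter is $\sqrt{3}\,v_s$, and taking the maximum over $X$ gives the bound. Your side remarks are sound refinements that the paper leaves implicit. These are the singleton-voxel case, where Eq.~\ref{eq:medoid} is undefined, and the point that the hypothesis actually used is that the safeguard did not fire, i.e.\ $G(v_s;X)\le B$ at the terminal $v_s$ (the literal condition $|\mathcal{C}_{\text{init}}|\le B$ holds automatically after the safeguard, by Eq.~\ref{eq:stage1set}).
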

\begin{proof}
Each $\mathbf{t}_i$ lies in one occupied voxel $\mathcal{V}_k$, and
stage 1 keeps one token $i_k^{\text{rep}}\in\mathcal{V}_k$. Both $\mathbf{t}_i$ and $\mathbf{t}_{i_k^{\text{rep}}}$ lie in a cube of side $v_s$, whose greatest internal distance is $\sqrt{3}\,v_s$, hence $\delta(\mathbf{t}_i;\mathcal{C}_{\text{init}})\le\sqrt{3}\,v_s$ for all $i$, and the maximum over $X$ gives the claim. 
\end{proof}

\begin{lemma}[Stage 2 bound]
$d_H(X,\mathcal{C}^{j+1})\le d_H(X,\mathcal{C}^j)$, hence
$d_H(X,\mathcal{C})\le d_H(X,\mathcal{C}_{\text{init}})$.
\end{lemma}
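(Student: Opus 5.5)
The argument is a monotonicity step in which enlarging the selection can only shrink nearest-selected-token distances, followed by an induction over the FPS steps. First I show pointwise that adding a token never increases any nearest-neighbor distance. Then I take the maximum over $X$ and chain the resulting inequalities through stage 2.

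\textbf{Step 1 (pointwise monotonicity).} By construction $\mathcal{C}^{j+1}=\mathcal{C}^j\cup\{m_{j+1}^*\}\supseteq\mathcal{C}^j$. For any $\mathbf{t}\in X$, the minimum defining $\delta(\mathbf{t};\mathcal{C}^{j+1})$ ranges over a superset of the indices in $\delta(\mathbf{t};\mathcal{C}^j)$. Therefore
\[
\delta(\mathbf{t};\mathcal{C}^{j+1})=\min\bigl(\delta(\mathbf{t};\mathcal{C}^j),\,\|\mathbf{t}-\mathbf{t}_{m_{j+1}^*}\|_2\bigr)\le\delta(\mathbf{t};\mathcal{C}^j).
\]
This needs $\mathcal{C}^j\neq\emptyset$ so that $\delta$ is well defined. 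That holds because $|\mathcal{C}_{\text{init}}|\ge 1$: $B_{\text{init}}\ge1$, and at least one voxel is occupied since $M\ge1$.

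\textbf{Step 2 (maximum over $X$).} Taking $\max_{\mathbf{t}\in X}$ on both sides preserves the inequality, because a pointwise bound implies a bound on the maxima. This gives $d_H(X,\mathcal{C}^{j+1})\le d_H(X,\mathcal{C}^j)$ for each $j=0,\dots,B_{\text{expan}}-1$. In the standing setting $|\mathcal{C}_{\text{init}}|\le B<M$, every FPS pick exists: at step $j+1$ there remain $M-|\mathcal{C}^j|>0$ unselected tokens, so the $\arg\max$ in Alg.~\ref{alg:proposedmethod} is over a nonempty set. The $-1$ sentinel ensures that only unselected tokens are chosen whenever some unselected token has $d_{\min}\ge0$, which always holds. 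So $\mathcal{C}^{j+1}$ is a genuine superset with one more element.

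\textbf{Step 3 (telescoping).} Chaining the inequalities from $j=0$ to $B_{\text{expan}}-1$ gives $d_H(X,\mathcal{C})=d_H(X,\mathcal{C}^{B_{\text{expan}}})\le d_H(X,\mathcal{C}^0)=d_H(X,\mathcal{C}_{\text{init}})$. If $B_{\text{expan}}=0$ (case 2), this holds with equality. Combined with Lemma 1, it yields $d_H(X,\mathcal{C})\le\sqrt3\,v_s$.

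I expect no real obstacle. Monotonicity is purely set-theoretic and does not use the FPS rule at all: any superset works. The FPS choice and the squared-versus-unsquared distance question matter only for the stronger remark that each pick targets the current worst-covered token. Since squaring is monotone on nonnegative reals, the $\arg\max$ of $d_{\min}$ computed with squared distances coincides with the $\arg\max$ of $\delta$. The only care needed is the well-definedness bookkeeping in Steps 1 and 2: a nonempty seed, and enough unselected tokens for each pick.
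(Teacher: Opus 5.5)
Your proposal is correct and follows essentially the same route as the paper: superset monotonicity of the inner minimum gives $\delta(\mathbf{t};\mathcal{C}^{j+1})\le\delta(\mathbf{t};\mathcal{C}^j)$ pointwise, the maximum over $X$ preserves this, and iterating over the stage 2 steps gives $d_H(X,\mathcal{C})\le d_H(X,\mathcal{C}_{\text{init}})$. Your extra bookkeeping (nonempty seed, existence of each FPS pick, the $-1$ sentinel) goes beyond what the paper spells out here; the paper handles the existence of each pick in its Theorem rather than in this lemma.
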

\begin{proof}
Since $\mathcal{C}^{j+1}\supseteq\mathcal{C}^j$, the minimum in
$\delta$ is over a superset, so $\delta(\mathbf{t};\mathcal{C}^{j+1})\le
\delta(\mathbf{t};\mathcal{C}^j)$ for all $\mathbf{t}$; the max over $X$ preserves this, and iterating gives the second claim. 
\end{proof}

\begin{theorem}[Exact budget and coverage]
For any budget $B$ with $B<M$ and $|\mathcal{C}_{\text{init}}|\le B$, CoVeR returns $\mathcal{C}$ with
\[
|\mathcal{C}|=B,\quad d_H(X,\mathcal{C})\le d_H(X,\mathcal{C}_{\text{init}})\le\sqrt{3}\,v_s
\]
\end{theorem}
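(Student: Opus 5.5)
The theorem has two parts, and I will prove them separately: the exact budget $|\mathcal{C}|=B$ by counting, and the coverage chain from Lemma~1 and Lemma~2.

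\textbf{Budget.} Under the hypothesis $|\mathcal{C}_{\text{init}}|\le B$, the safeguard in Alg.~\ref{alg:proposedmethod} is inactive. Hence $\mathcal{C}_{\text{init}}$ is exactly one representative per occupied voxel, and these are distinct indices because the voxels are disjoint. Stage 2 then runs $B_{\text{expan}}=B-|\mathcal{C}_{\text{init}}|\ge 0$ iterations.

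I will show by induction on $j$ that the $j$-th pick $m_j^*$ is a new index, not already in $\mathcal{C}^{j-1}$. Two facts drive this.
\begin{itemize}
\item Every index already in $\mathcal{C}^{j-1}$ has $d_{\min}$ set to $-1$.
\item Every unselected index has $d_{\min}\ge 0$, since it is a squared distance.
\end{itemize}
The second fact needs an unselected index to exist. This holds because $|\mathcal{C}^{j-1}|=|\mathcal{C}_{\text{init}}|+j-1\le B-1<M$. So the argmax attains a value $\ge 0$ at some unselected index and cannot return an index with $d_{\min}=-1$.

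One subtlety is ties, which can arise when duplicate coordinates give $d_{\min}=0$. Even then the argmax lands on an index with value $\ge 0>-1$, so it is still unselected. Each step therefore increases $|\mathcal{C}^j|$ by exactly one, and $|\mathcal{C}|=|\mathcal{C}_{\text{init}}|+B_{\text{expan}}=B$.

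\textbf{Coverage.} The second inequality, $d_H(X,\mathcal{C}_{\text{init}})\le\sqrt3\,v_s$, is Lemma~1. Its hypothesis, that every point lies in an occupied voxel holding a retained representative, holds precisely because the safeguard is inactive. The first inequality, $d_H(X,\mathcal{C})\le d_H(X,\mathcal{C}_{\text{init}})$, is Lemma~2. That lemma only uses the nesting $\mathcal{C}^j\subseteq\mathcal{C}^{j+1}$, which follows from the construction $\mathcal{C}^{j+1}=\mathcal{C}^j\cup\{m_{j+1}^*\}$. Chaining the two lemmas gives the claim.

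The choice of squared versus plain Euclidean distance does not affect this argument, since it never uses which point FPS picks. If a reader wants the identification with the paper's $\max$–$\min$ objective, it suffices to note that squaring is monotone on nonnegative reals.

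\textbf{Main obstacle.} The only delicate step is the budget count: showing that the $-1$ sentinel together with $B<M$ guarantees a fresh index every iteration, including in the degenerate tie case. I will handle it with the sign argument above. The coverage part is a direct composition of the two lemmas.
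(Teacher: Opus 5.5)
Your proposal is correct and follows essentially the same route as the paper: the exact budget comes from the $-1$ sentinel (preserved by the $\min$-update) together with $|\mathcal{C}_{\text{init}}|+(j-1)\le B-1<M$, which guarantees an unselected index at each of the $B_{\text{expan}}$ steps, and coverage is the chain of Lemma~1 and Lemma~2. Your explicit treatment of ties, and your reading of the hypothesis $|\mathcal{C}_{\text{init}}|\le B$ as ``safeguard inactive'' (which Lemma~1 needs), are slightly more careful than the paper's write-up but do not change the argument.
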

\begin{proof}
Stage 2 runs $B_{\text{expan}}=B-|\mathcal{C}_{\text{init}}|\ge0$
steps. Selected indices are marked $d_{\min}=-1$, and the update
$d_{\min}(m)\leftarrow\min(d_{\min}(m),\|\cdot\|_2^2)$ preserves the mark. Before step $j$, $|\mathcal{C}_{\text{init}}|+(j-1)\le B-1<M$, so an unselected index exists and the $\arg\max$ returns it. Each step adds a new token, giving $|\mathcal{C}|= |\mathcal{C}_{\text{init}}| + B_{\text{expan}}= B$. 

\noindent\emph{Coverage.} Chain Lemma 1 and Lemma 2.
\end{proof}

\begin{lemma}[Greedy pick realizes $d_H$]
$\delta(\mathbf{t}_{m_{j+1}^*};\mathcal{C}^j)=d_H(X,\mathcal{C}^j)$.
\end{lemma}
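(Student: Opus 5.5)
I will read off the claim from the stage-2 loop of Alg.~\ref{alg:proposedmethod} by pinning down what $d_{\min}$ stores just before step $j+1$. The invariant to establish, by induction on $j$, is
\[
d_{\min}(m)=\delta(\mathbf{t}_m;\mathcal{C}^j)^2 \ \text{ for } m\notin\mathcal{C}^j,\qquad d_{\min}(m)=-1 \ \text{ for } m\in\mathcal{C}^j .
\]
For the base case $j=0$, use the initialization line of the algorithm: it sets $d_{\min}(m)=\min_{s\in\mathcal{C}_{\text{init}}}\|\mathbf{t}_m-\mathbf{t}_s\|_2^2=\delta(\mathbf{t}_m;\mathcal{C}^0)^2$ for unselected $m$ and marks members of $\mathcal{C}_{\text{init}}$ with $-1$.

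For the inductive step, three facts are needed. First, $\min(a,b)$ with $a=\delta(\mathbf{t}_m;\mathcal{C}^j)^2$ and $b=\|\mathbf{t}_m-\mathbf{t}_{m^*_{j+1}}\|_2^2$ equals $\delta(\mathbf{t}_m;\mathcal{C}^{j+1})^2$, because a minimum over $\mathcal{C}^j\cup\{m^*_{j+1}\}$ splits into the two pieces. Second, the $-1$ marks survive the update, since squared distances are nonnegative; this is exactly the observation already used in the proof of the exact-budget theorem. Third, the newly picked index receives the mark $-1$ at that step.

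Given the invariant, $m^*_{j+1}=\arg\max_m d_{\min}(m)$ is taken over all indices. There is at least one unselected index (from the theorem, since $|\mathcal{C}^j|<B<M$), and unselected entries are $\ge 0>-1$, so the maximizer is unselected. It therefore maximizes $\delta(\mathbf{t}_m;\mathcal{C}^j)^2$ over $m\notin\mathcal{C}^j$.

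Squaring is strictly increasing on $[0,\infty)$, so this maximizer also maximizes $\delta(\mathbf{t}_m;\mathcal{C}^j)$ over unselected $m$. That is the remark in the main text that the squared metric leaves the selection order unchanged.

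The last step compares the maximum over unselected tokens with $d_H(X,\mathcal{C}^j)=\max_{\mathbf{t}\in X}\delta(\mathbf{t};\mathcal{C}^j)$, which ranges over all of $X$. For selected tokens, $\delta(\mathbf{t}_s;\mathcal{C}^j)=0$, so they cannot raise the maximum above the value attained on unselected tokens, which is $\ge 0$. Hence the max over $X$ equals the max over unselected indices, and this gives $\delta(\mathbf{t}_{m^*_{j+1}};\mathcal{C}^j)=d_H(X,\mathcal{C}^j)$.

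The main obstacle is bookkeeping rather than geometry: the $-1$ sentinel, the fact that the algorithm's $\arg\max$ ranges over all $m$, and duplicate coordinates. If several unselected tokens share a location with a selected one, they have $\delta=0$, but this is harmless because equality of values is all that is claimed. Ties in the $\arg\max$ are likewise harmless, since any maximizer attains the same value.
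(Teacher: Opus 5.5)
Your proof is correct and follows the paper's argument: the stage-2 pick maximizes $\delta(\mathbf{t}_m;\mathcal{C}^j)^2$ over unselected $m$, squaring is increasing on $[0,\infty)$, and selected tokens have $\delta=0$, so the maximum over $X$ equals the maximum over unselected tokens. Your induction on the $d_{\min}$ invariant, including the $-1$ sentinel, makes explicit the bookkeeping that the paper takes for granted from the FPS description, and your argument from nonemptiness and nonnegativity slightly tidies the paper's case split on whether $d_H(X,\mathcal{C}^j)>0$.
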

\begin{proof}
Stage 2 selects $m_{j+1}^*=\arg\max_{m\notin\mathcal{C}^j}
\delta(\mathbf{t}_m;\mathcal{C}^j)^2$ (i.e., $\mathcal{D}_{\text{spatial}}$); as $u\mapsto u^2$ is increasing on
$[0,\infty)$, this equals the $\arg\max$ of $\delta$. Since
$\delta(\mathbf{t};\mathcal{C}^j)=0$ for $\mathbf{t}\in\mathcal{C}^j$, the maximum of $\delta(\cdot;\mathcal{C}^j)$ over $X$ equals its maximum over
$X\setminus\mathcal{C}^j$ whenever $d_H(X,\mathcal{C}^j)>0$, and both equal $0$ otherwise.
\end{proof}

\begin{proposition}[2-approximation at the expansion budget]
Let $\mathrm{OPT}_k(X)=\min_{|\mathcal{A}|=k}d_H(X,\mathcal{A})$ and $r=d_H(X,\mathcal{C})$. If $B_{\text{expan}}\ge1$, then $d_H(X,\mathcal{C})\le 2\cdot\mathrm{OPT}_{B_{\text{expan}}}(X)$.
\end{proposition}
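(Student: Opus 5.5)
The plan is the classical Gonzalez argument for farthest-point $k$-center, applied only to the $k:=B_{\text{expan}}$ tokens chosen in stage 2. The stage~1 seed can only make the distances smaller, so it does no harm. Let $r=d_H(X,\mathcal{C})$. If $r=0$ the claim is trivial, so assume $r>0$. The goal is to exhibit $k+1$ points of $X$ that are pairwise at distance at least $r$. A pigeonhole argument against any $k$-center solution then finishes the proof.

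\textbf{Step 1: the separated set.} Take the stage~2 picks $\mathbf{t}_{m_1^*},\dots,\mathbf{t}_{m_k^*}$. Add one more point $\mathbf{x}\in X$ attaining the maximum in Eq.~\ref{eq:hausdorff} for the final selection, so that $\delta(\mathbf{x};\mathcal{C})=r>0$. In particular $\mathbf{x}$ is not selected and is distinct from all picks.

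\textbf{Step 2: pairwise separation.} Consider $i<j$. By Lemma~3 (greedy pick realizes $d_H$), the pick $m_j^*$ satisfies
\[
\delta(\mathbf{t}_{m_j^*};\mathcal{C}^{j-1})=d_H(X,\mathcal{C}^{j-1}).
\]
By Lemma~2 (monotonicity), $d_H(X,\mathcal{C}^{j-1})\ge d_H(X,\mathcal{C})=r$. Since $m_i^*\in\mathcal{C}^{j-1}$, this gives $\|\mathbf{t}_{m_j^*}-\mathbf{t}_{m_i^*}\|_2\ge r$. For the extra point, every $m_j^*$ lies in $\mathcal{C}$, so $\|\mathbf{x}-\mathbf{t}_{m_j^*}\|_2\ge\delta(\mathbf{x};\mathcal{C})=r$. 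Hence the $k+1$ points are pairwise at distance at least $r$. The initial set $\mathcal{C}_{\text{init}}$ enters only through the lower bound $d_H(X,\mathcal{C}^{j-1})\ge r$, which is why the bound is stated at the expansion budget and not at $B$.

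\textbf{Step 3: pigeonhole.} Let $\mathcal{A}$ with $|\mathcal{A}|=k$ attain $\mathrm{OPT}_k(X)$. Each of the $k+1$ separated points is within $\mathrm{OPT}_k(X)$ of some center in $\mathcal{A}$. With $k+1$ points and only $k$ centers, two of them, say $\mathbf{p}$ and $\mathbf{q}$, share a center $\mathbf{c}$. The triangle inequality then gives
\[
r\le\|\mathbf{p}-\mathbf{q}\|_2\le\|\mathbf{p}-\mathbf{c}\|_2+\|\mathbf{c}-\mathbf{q}\|_2\le 2\,\mathrm{OPT}_k(X).
\]
The discrete setting, where centers are restricted to $X$, causes no issue, because only the triangle inequality is used.

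\textbf{Main subtlety.} The delicate step is Step~2: one must check that each pick's distance at selection time is at least the \emph{final} radius $r$. This is exactly what chaining Lemma~3 with Lemma~2 provides. One must also handle the tie and degenerate cases via the assumption $r>0$, which guarantees that $\mathbf{x}$ is a genuinely new point. The hypothesis $B_{\text{expan}}\ge1$ ensures that $k\ge1$, so $\mathrm{OPT}_k$ is well defined.
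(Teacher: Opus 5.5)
Your proof is correct and follows the paper's argument essentially step for step. You build the same separated set (the $B_{\text{expan}}$ stage-2 picks plus a worst-covered point $\mathbf{x}$), get pairwise separation $\ge r$ by chaining Lemma~3 with Lemma~2, and finish with the same pigeonhole and triangle-inequality bound against a $B_{\text{expan}}$-center set; the only cosmetic difference is that you fix an optimal $\mathcal{A}$ up front, whereas the paper shows $r\le 2\,d_H(X,\mathcal{A})$ for every such $\mathcal{A}$ and then minimizes.
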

\begin{proof}
Let $R^{j}=d_H(X,\mathcal{C}^{j})$; by Lemma~2, $R^{0}\ge\cdots\ge R^{B_{\text{expan}}}=r$. If $r=0$, the claim is immediate, so
assume $r>0$. For $1\le i<j\le B_{\text{expan}}$,
$m_i^*\in\mathcal{C}^{j-1}$, so by Lemma~3,
$\|\mathbf{t}_{m_j^*}-\mathbf{t}_{m_i^*}\|_2\ge\delta(\mathbf{t}_{m_j^*};\mathcal{C}^{j-1})=R^{j-1}\ge r$.
Pick $\mathbf{x}^*\in X$ with $\delta(\mathbf{x}^*;\mathcal{C})=r$ (unselected, since $r>0$); it is $\ge r$ from
every selected token, so $P=\{\mathbf{t}_{m_1^*},\dots,\mathbf{t}_{m_{B_{\text{expan}}}^*},\mathbf{x}^*\}$ is a set of $B_{\text{expan}}+1$ points pairwise $\ge r$ apart. Let $\mathcal{A}$ be any index set with
$|\mathcal{A}|=B_{\text{expan}}$ and $\rho=d_H(X,\mathcal{A})$. Since $|P|>|\mathcal{A}|$, two points
$\mathbf{p},\mathbf{q}\in P$ share a nearest index $a\in\mathcal{A}$, so by the triangle inequality $r\le\|\mathbf{p}-\mathbf{q}\|_2\le\|\mathbf{p}-\mathbf{t}_a\|_2+\|\mathbf{t}_a-\mathbf{q}\|_2\le2\rho$. Thus, minimizing over $\mathcal{A}$ gives $r\le2\cdot\mathrm{OPT}_{B_{\text{expan}}}(X)$.
\end{proof}

\noindent\textbf{Remark 1 (coverage interpretation).} Proposition~1 bounds the final selection $\mathcal{C}$, but against $\mathrm{OPT}_{B_{\text{expan}}}$ rather than $\mathrm{OPT}_B$, since the
$B_{\text{init}}$ voxel seeds need not be mutually separated and the pairwise argument uses the $B_{\text{expan}}$ FPS picks together with the worst-covered point. Combined with Theorem~1, CoVeR attains:
\[
d_H(X,\mathcal{C})\le\min\!\big(\sqrt{3}\,v_s,\; 2\cdot\mathrm{OPT}_{B_{\text{expan}}}(X)\big)
\]
These bounds provide complementary guarantees on the final coverage without solving the NP-hard $k$-center objective.

\begin{figure*}[t]
\begin{center}
\includegraphics[width=\textwidth]{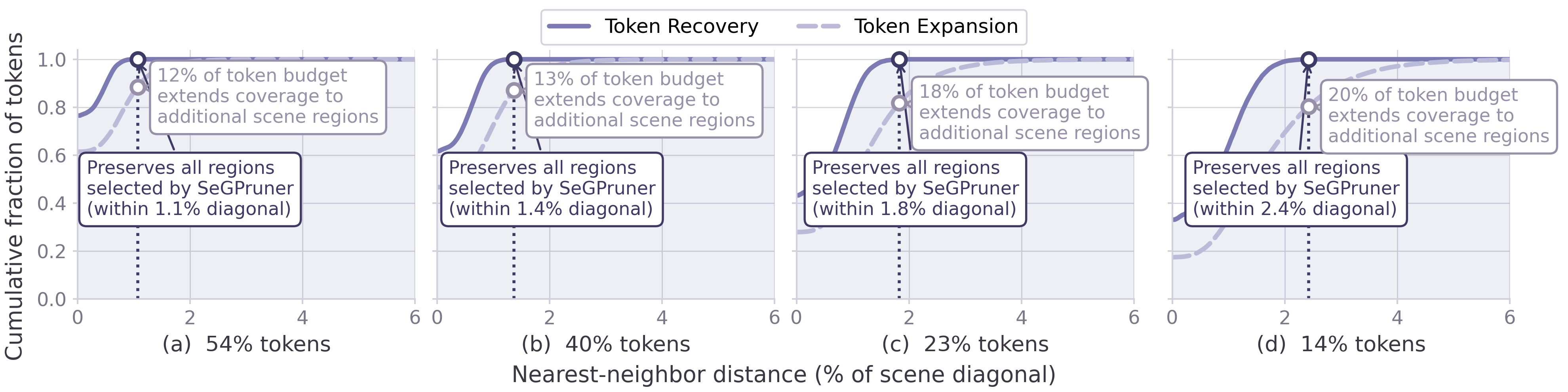}
\caption{\textbf{Cumulative distributions of directed
nearest-neighbor distances on ScanQA} across multiple token budgets. At all budgets, the Token Recovery (TR) reaches one, which indicates CoVeR keeps a token near
every region SeGPruner selects, within $1.1\%$, $1.4\%$, $1.8\%$, and $2.4\%$ of the scene diagonal, respectively. The Token Expansion (TE) also shows that CoVeR additionally covers regions SeGPruner leaves unrepresented.}
\label{fig:recovery_expansion_supp}
\end{center}
\vspace{-1.5em}
\end{figure*}

\begin{table}[t]
\centering
\caption{\textbf{Category-level performance on OpenEQA.} We compare CoVeR with VTC~\cite{dtc} and DTC~\cite{dtc} based on their published scores, computed by \texttt{GPT-4}~\cite{gpt4}. \textbf{LLM-Match.} is the overall score across seven categories and \textbf{Rel.} is average percentage of performance maintained. Categories: (a) object recognition, (b) object localization, (c) attribute recognition, (d) spatial understanding, (e) object state recognition, (f) functional reasoning, and (g) world knowledge. \texttt{*LLaVA-OV-7B}}
\label{tab:openeqa_gpt4}
\renewcommand{\arraystretch}{1.1}
\setlength{\tabcolsep}{5pt}
\resizebox{0.48\textwidth}{!}{
\begin{tabular}{l | c c c c c c c |c c}
\toprule
\multirow{3}{*}{Methods} & \multicolumn{7}{c|}{EQA Category} & \multirow{3}{*}{\makecell{LLM\\Match.}} & \multirow{3}{*}{Rel.} \\
\cmidrule(lr){2-8}
 & (a) & (b) & (c) & (d) & (e) & (f) & (g) &  & \\
\midrule
\multicolumn{10}{c}{Retain 100\% Tokens} \\
\midrule
\texttt{LLaVA*} & 48.6 & 43.0 & 74.4 & 43.6 & 74.5 & 53.1 & 55.0 & 56.2 & 100\\

\midrule
\multicolumn{10}{c}{Retain 43\% Tokens} \\
\midrule
\textcolor{ExpCText}{VTC} & -- & -- & -- & -- & -- & -- & -- & 54.2 & 96.4\\
\textcolor{ExpCText}{DTC} & -- & -- & -- & -- & -- & -- & -- & 54.3 & 96.6 \\
\textbf{CoVeR} & 49.9 & 40.3 & 74.6 & 45.2 & 71.6 & 54.0 & 54.6 & \cellcolor{tablecolor}\textbf{55.9} & \cellcolor{tablecolor}\textbf{99.5} \\

\midrule
\multicolumn{10}{c}{Retain 26\% Tokens} \\
\midrule
\textcolor{ExpCText}{VTC} & 40.3 & 35.7 & 62.9 & 40.3 & 71.5 & 52.5 & 49.5 & 50.5 & 89.9\\
\textcolor{ExpCText}{DTC} & 44.6 & 39.4 & 72.7 & 43.9 & 71.3 & 53.0 & 53.2 & 54.1 & 96.3\\
\textbf{CoVeR} & 47.9 & 40.0 & 72.8 & 45.1 & 73.0 & 54.2 & 54.6 & \cellcolor{tablecolor}\textbf{55.5} & \cellcolor{tablecolor}\textbf{98.8} \\

\midrule
\multicolumn{10}{c}{Retain 17\% Tokens} \\
\midrule
\textcolor{ExpCText}{VTC} & 37.2 & 33.1 & 59.9 & 38.4 & 65.9 & 51.0 & 45.9 & 47.4 & 84.3\\
\textcolor{ExpCText}{DTC} & 41.2 & 38.1 & 68.3 & 42.3 & 70.5 & 55.3 & 50.7 & 52.5 & 93.4\\
\textbf{CoVeR} & 46.7 & 40.1 & 68.4 & 45.1 & 69.9 & 54.3 & 53.3 & \cellcolor{tablecolor}\textbf{54.0} & \cellcolor{tablecolor}\textbf{96.1} \\

\midrule
\multicolumn{10}{c}{Retain 8\% Tokens} \\
\midrule
\textcolor{ExpCText}{VTC} & -- & -- & -- & -- & -- & -- & -- & 43.6 & 77.6 \\
\textcolor{ExpCText}{DTC} & -- & -- & -- & -- & -- & -- & -- & 49.3 & 87.7 \\
\textbf{CoVeR} & 43.1 & 34.7 & 61.9 & 41.7 & 65.9 & 52.0 & 51.4 & \cellcolor{tablecolor}\textbf{50.1} & \cellcolor{tablecolor}\textbf{89.1} \\
\bottomrule
\end{tabular}
}
\end{table}

\noindent\textbf{Remark 2 (case 3).} If $|\mathcal{C}_{\text{init}}|>B$, CoVeR keeps the representatives of the $B$ most populated voxels (Sec.~\ref{sec:overview}). The budget is still exact, but this subset drops some voxel representatives, so Lemma 1 need not hold. This case does not arise at any budget or split we evaluate.

\noindent\textbf{Summary.} The analysis gives (1) an exact per-scene budget $|\mathcal{C}|=B$; (2) under the inactive-safeguard condition, every token within $\sqrt{3}\,v_s$ of a retained token after stage 1, preserved through stage 2; and (3) a $2$-approximation to the $k$-center optimum at the expansion budget. Stage 1 thus contributes the absolute $\sqrt{3}\,v_s$ bound at low cost, while stage 2 fills the rest with a 2-approximation.

Fig.~\ref{fig:appendix_b} validates the bound: $d_H(X,\mathcal{C})$ rises gradually as the budget shrinks
but stays well below $\sqrt{3}\,v_s$, showing the bound is conservative.

\begin{table}[!ht]
\centering
\caption{\textbf{Category-level performance on OpenEQA.} We compare CoVeR with VisPruner~\cite{vispruner} and SeGPruner~\cite{segpruner}, computed by \texttt{GPT-4o}~\cite{gpt4o}. \textbf{LLM-Match.} is the overall score across seven categories and \textbf{Rel.} is the average percentage of performance maintained. Categories: (a) object recognition, (b) object localization, (c) attribute recognition, (d) spatial understanding, (e) object state recognition, (f) functional reasoning, and (g) world knowledge. \texttt{*LLaVA-OV-7B}}
\renewcommand{\arraystretch}{1.1}
\label{tab:openeqa_gpt4o}
\setlength{\tabcolsep}{5pt}
\resizebox{.48\textwidth}{!}{
\begin{tabular}{l | c c c c c c c |c c}
\toprule
\multirow{3}{*}{Methods} & \multicolumn{7}{c|}{EQA Category} & \multirow{3}{*}{\makecell{LLM\\Match.}} & \multirow{3}{*}{Rel.} \\
\cmidrule(lr){2-8}
 & (a) & (b) & (c) & (d) & (e) & (f) & (g) &  & \\
\midrule

\multicolumn{10}{c}{Retain 100\% Tokens} \\
\midrule
\texttt{LLaVA*} & 53.9 & 43.4 & 77.6 & 49.1 & 73.1 & 58.9 & 57.0 & 59.1 & 100 \\

\midrule
\multicolumn{10}{c}{Retain 43\% Tokens} \\
\midrule
\textcolor{ExpDText}{VisPruner}
& 52.3 & 41.9 & 76.0 & 49.0 & 74.0 & 58.2 & 57.2
& 58.4 & 98.8 \\

\textcolor{ExpDText}{SeGPruner}
& 52.5 & 43.2 & 74.3 & 50.2 & 73.0 & 56.2 & 56.3
& 58.0 & 98.1 \\

\textbf{CoVeR}
& 52.5 & 41.2 & 77.1 & 51.3 & 71.9 & 58.9 & 57.2
& \cellcolor{tablecolor}\textbf{58.6}
& \cellcolor{tablecolor}\textbf{99.2} \\

\midrule
\multicolumn{10}{c}{Retain 26\% Tokens} \\
\midrule
\textcolor{ExpDText}{VisPruner}
& 51.0 & 39.2 & 75.5 & 51.4 & 70.8 & 56.8 & 55.4
& 57.1 & 96.6 \\

\textcolor{ExpDText}{SeGPruner}
& 49.9 & 39.8 & 76.2 & 50.1 & 72.6 & 57.4 & 56.7
& 57.5 & 97.3 \\

\textbf{CoVeR}
& 51.0 & 40.9 & 75.1 & 49.9 & 73.2 & 57.8 & 56.0
& \cellcolor{tablecolor}\textbf{57.7}
& \cellcolor{tablecolor}\textbf{97.6} \\

\midrule
\multicolumn{10}{c}{Retain 17\% Tokens} \\
\midrule
\textcolor{ExpDText}{VisPruner}
& 50.4 & 38.6 & 70.9 & 50.1 & 69.1 & 57.6 & 54.8
& 55.9 & 94.6 \\

\textcolor{ExpDText}{SeGPruner}
& 48.3 & 38.7 & 72.4 & 50.6 & 69.1 & 57.3 & 56.6
& 56.0 & 94.8 \\

\textbf{CoVeR}
& 49.7 & 41.4 & 72.0 & 50.5 & 70.3 & 58.3 & 55.4
& \cellcolor{tablecolor}\textbf{56.8}
& \cellcolor{tablecolor}\textbf{96.1} \\

\midrule
\multicolumn{10}{c}{Retain 8\% Tokens} \\
\midrule
\textcolor{ExpDText}{VisPruner}
& 40.5 & 36.9 & 65.5 & 45.5 & 64.6 & 56.7 & 51.4
& 51.5 & 87.1 \\

\textcolor{ExpDText}{SeGPruner}
& 44.6 & 38.6 & 63.4 & 46.6 & 65.8 & 55.0 & 53.5
& 52.5 & 88.8 \\

\textbf{CoVeR}
& 47.3 & 36.3 & 65.6 & 46.3 & 66.2 & 55.5 & 54.0
& \cellcolor{tablecolor}\textbf{53.0}
& \cellcolor{tablecolor}\textbf{89.7} \\
\bottomrule
\end{tabular}}
\vspace{-1em}
\end{table}

\begin{table*}[t]
\centering
\begin{minipage}[t]{0.477\textwidth}
\centering
\renewcommand{\arraystretch}{1.1}
\caption{\textbf{OpenEQA comparison.} CoVeR remains competitive,  relative to both commercial and open VLMs, after removing up to 92\% of visual tokens. \textcolor{gray}{($\cdot$)} denote change from the base model. LLM-Match uses \texttt{GPT-4}~\cite{gpt4}.}
\label{tab:comparison_vlm_openeqa}

\resizebox{\linewidth}{!}{
\setlength{\tabcolsep}{2pt}
\begin{tabular}{lc}
\toprule
\textbf{Models} & \textbf{LLM-Match$\uparrow$}\\
\midrule

\graydifftext{\textit{Blind Text-only LLM baseline}} \\
GPT4~\cite{gpt4} & 33.5 \\
LLaMA-2 70B~\cite{llama2} & 28.3 \\
\midrule

\graydifftext{\textit{Closed VLM models}} \\
Claude-3 Opus & 36.3 \\
Gemini 1.0 Pro Vision~\cite{team2023gemini} & 44.9 \\
Claude-3.5 Sonnet & 48.7 \\
GPT4-V (15 frames) \cite{gpt4} & 54.6 \\
GPT4-V (50 frames) \cite{gpt4} & 55.3 \\
\midrule

\graydifftext{\textit{Open VLM models}} \\
Video-LLaMA~\cite{zhang2023video} {\scriptsize\textcolor{gray}{[EMNLP 2023]}} & 20.0 \\
 LLaMA-2 w/ Concept Graph~\cite{openeqa} {\scriptsize\textcolor{gray}{[CVPR 2024]}}  & 28.7 \\
AuroraCap~\cite{chai2025auroracap} {\scriptsize\textcolor{gray}{[ICLR 2025]}} & 28.9 \\
Video-ChatGPT~\cite{maaz2024video} {\scriptsize\textcolor{gray}{[ACL 2024]}} & 32.1 \\
LLaMA-2 w/ Sparse Voxel Map~\cite{openeqa} {\scriptsize\textcolor{gray}{[CVPR 24]}} & 34.3 \\
LLaMA-2 w/ LLaVA-1.5 Caption~\cite{openeqa} {\scriptsize\textcolor{gray}{[CVPR 24]}} & 36.8 \\
Chat-UniVi~\cite{jin2024chat} {\scriptsize\textcolor{gray}{[CVPR 2024]}} & 42.3 \\
Video-LLaMA2~\cite{videollama} {\scriptsize\textcolor{gray}{[arXiv 2024]}} & 49.2 \\
InternVL2.5 (16 frames)~\cite{internvl25} {\scriptsize\textcolor{gray}{[arXiv 2024]}} & 54.4 \\
MovieChat (w/ \texttt{LLaVA-OV-7B})~\cite{moviechat} {\scriptsize\textcolor{gray}{[CVPR 2024]}}  & 54.9 \\
LLaVA-3D (32 frames)~\cite{llava3d} {\scriptsize\textcolor{gray}{[ICCV 2025]}} & 53.2 \\

Qwen2.5-VL (16 frames)~\cite{qwen2.5} {\scriptsize\textcolor{gray}{[arXiv 2025]}} & 50.8 \\
Magma (16 frames)~\cite{magma} {\scriptsize\textcolor{gray}{[CVPR 2025]}} & 49.1 \\
NVILA (16 frames)~\cite{nvila} {\scriptsize\textcolor{gray}{[CVPR 2025]}} & 54.0 \\
InternVL3 (16 frames)~\cite{internvl3} {\scriptsize\textcolor{gray}{[arXiv 2025]}} & 55.5 \\
ThinkAct (16 frames)~\cite{thinkact} {\scriptsize\textcolor{gray}{[NeurIPS 2026]}} & 56.2 \\
\midrule

\graydifftext{\textit{Ours} (12 frames)} \\
\covbar{100}{\texttt{LLaVA-OV-7B} (100\%)} & 56.2\\
\covbar{56}{w/ CoVeR} \tokdrop{44} & 56.0\graydiff{-0.2}\\
\covbar{43}{w/ CoVeR} \tokdrop{57} & 55.9\graydiff{-0.3}\\
\covbar{26}{w/ CoVeR} \tokdrop{74} & 55.5\graydiff{-0.7}\\
\covbar{17}{w/ CoVeR} \tokdrop{83} & 54.0\graydiff{-2.2}\\
\covbar{8}{w/ CoVeR} \tokdrop{92} & 50.1\graydiff{-6.1}\\

\bottomrule
\end{tabular}
}
\end{minipage}
\hfill
\begin{minipage}[t]{0.5065\textwidth}
\centering
\renewcommand{\arraystretch}{1}
\caption{\textbf{ScanQA (val) comparison.} CoVeR remains competitive with task-specific 3D models, video-LMMs, and fine-tuned 3D-LMMs, after removing up to 91\% of visual tokens, without 3D-specific training or learned pruning.}
\label{tab:comparison_vlm_scanqa}
\resizebox{\linewidth}{!}{
\begin{tabular}{lc}
\toprule
\textbf{Models} & \textbf{EM@1$\uparrow$}\\
\midrule
\graydifftext{\textit{Task-specific models}} \\
VoteNet+MCAN~\cite{yu2019deep} {\scriptsize\textcolor{gray}{[CVPR 2019]}} & 17.3 \\
ScanRefer+MCAN~\cite{yu2019deep} {\scriptsize\textcolor{gray}{[CVPR 2019]}} & 18.6 \\
ScanQA~\cite{scanqa} {\scriptsize\textcolor{gray}{[CVPR 2022]}} & 21.1 \\
Jin et al.~\cite{jin2023context} {\scriptsize\textcolor{gray}{[CVPR 2023]}} & 21.7 \\
3D-VisTA~\cite{3d-vista} {\scriptsize\textcolor{gray}{[ICCV 2023]}} & 22.4 \\
3DVLP ~\cite{zhang2024vision} {\scriptsize\textcolor{gray}{[AAAI 2024]}} & 24.0 \\
DSPNet (20 frames) \cite{dspnet} {\scriptsize\textcolor{gray}{[CVPR 2025]}} & 23.5 \\
\midrule

\graydifftext{\textit{Video-LMMs}} \\
Agent3D-Zero~\cite{zhang2024agent3d} {\scriptsize\textcolor{gray}{[ECCV 2024]}} & 17.5 \\
LLaVA-NeXT-Video~\cite{liu2024llavanext} {\scriptsize\textcolor{gray}{[arXiv 2024]}} & 18.7 \\
MovieChat (w/ LLaVA-OV-7B)~\cite{moviechat} {\scriptsize\textcolor{gray}{[CVPR 24]}}  & 26.0 \\
AuroraCap~\cite{chai2025auroracap} {\scriptsize\textcolor{gray}{[ICLR 2025]}} & 17.2 \\
\midrule

\graydifftext{\textit{Task-specific fine-tuned image/video LMMs}} \\
NaviLLM~\cite{zheng2024towards} {\scriptsize\textcolor{gray}{[CVPR 2024]}} & 23.0\\
Spatial-MLLM-4B~\cite{spatialllm} {\scriptsize\textcolor{gray}{[NeurIPS 2025]}} & 26.3\\
SPAR-mix \cite{sparmix} {\scriptsize\textcolor{gray}{[NeurIPS 2025]}} & 27.7 \\
SplatTalk-ScanQA-FT (100 frames)~\cite{splattalk} {\scriptsize\textcolor{gray}{[ICCV 2025]}}  & 22.3 \\
Proxy3D (32 frames)~\cite{proxy3d} {\scriptsize\textcolor{gray}{[CVPR 2026]}} & 25.2\\
\midrule
\graydifftext{\textit{Task-specific fine-tuned 3D-LMMs}} \\
3D-LLM~\cite{3dllm} {\scriptsize\textcolor{gray}{[NeurIPS 2023]}} & 20.5 \\
FE-3DGQA~\cite{zhao2022toward} {\scriptsize\textcolor{gray}{[TCSVT 2022]}} & 22.3 \\
ChatScene~\cite{chatscene} {\scriptsize\textcolor{gray}{[NeurIPS 2024]}} & 21.6 \\
LEO~\cite{leo} {\scriptsize\textcolor{gray}{[ICML 2024]}} & 24.5 \\
Scene-LLM~\cite{scenellm} {\scriptsize\textcolor{gray}{[WACV 2025]}} & 27.2 \\
Yuan et al. \cite{yuan2025empowering} {\scriptsize\textcolor{gray}{[CVPR 2025]}} & 22.9 \\
Inst3D-LMM \cite{inst3d} {\scriptsize\textcolor{gray}{[CVPR 2025]}} & 24.6 \\
\midrule

\graydifftext{\textit{Ours} (12 frames)} \\
\covbar{100}{\texttt{LLaVA-OV-7B} (100\%)} & 28.2\\
\covbar{54}{w/ CoVeR} \tokdrop{46} & 28.7\graydiff{+0.5}\\
\covbar{40}{w/ CoVeR} \tokdrop{60} & 28.9\graydiff{+0.7}\\
\covbar{23}{w/ CoVeR} \tokdrop{77} & 28.5\graydiff{+0.3}\\
\covbar{14}{w/ CoVeR} \tokdrop{86} & 27.9\graydiff{-0.3}\\
\covbar{9}{w/ CoVeR} \tokdrop{91} & 27.1\graydiff{-1.1}\\

\bottomrule
\end{tabular}
}
\end{minipage}
\end{table*}

\section{Results on Token Recovery and Expansion}

Figure \ref{fig:recovery_expansion_supp} shows the cumulative distributions on Token Recovery (TR) and Token Expansion (TE) at all budget levels. At every budget, the TR curve reaches one, so for every region SeGPruner selects, CoVeR retains a token nearby, within 2.4\% of the scene diagonal at the tightest budget. The TE further shows that CoVeR also places tokens in regions SeGPruner leaves uncovered. By prioritizing coverage, CoVeR still retains the informative regions that \textcolor{ExpDText}{\textit{learned importance}} methods select, while additionally covering other regions in the scene. This shows our advantage over \textcolor{ExpDText}{\textit{learned importance}}, which spends its budget on near-duplicate tokens from a few prominent regions and thus leaves the scene unrepresented.

\section{Additional Quantitative Results}
\noindent \textbf{OpenEQA Category Analysis.} CoVeR is particularly strong on spatial understanding (Tables~\ref{tab:openeqa_gpt4}-\ref{tab:openeqa_gpt4o}). At 17\% retention, its spatial score exceeds the full model when evaluated via \texttt{GPT-4} LLM Match (45.1 vs. 43.6) and \texttt{GPT-4o} (50.5 vs. 49.1). At 8\%, it also gives the best object-recognition score (47.3 vs. 44.6 for SeGPruner and 40.5 for VisPruner), with 89.1-89.7\% overall retention across both judges. 

\noindent \textbf{Other 3D Reasoning Models.} Tables~\ref{tab:comparison_vlm_openeqa} and~\ref{tab:comparison_vlm_scanqa} provide a broader context across task-specific 3D models, video LMMs, and general VLMs. CoVeR reaches 50.1 OpenEQA LLM-Match with 8\% of tokens and loses only 0.7 points at 26\% while outperforming prior models. On ScanQA, 23\% retention yields 28.5 EM@1, exceeding the listed task-specific and open VLM systems. Thus, the pruned model remains competitive in absolute terms, not only relative to its full-token \texttt{LLaVA-OV-7B} baseline.
    
\section{Additional Qualitative Results}
Figures~\ref{fig:scanqa_vis1}, \ref{fig:scanqa_vis2}, \ref{fig:sqa_vis1}, \ref{fig:sqa_vis2}, and \ref{fig:openeqa_vis} provide additional qualitative results from ScanQA, SQA3D, and OpenEQA datasets. Across different token budgets and 3D reasoning tasks, CoVeR preserves sufficient visual evidence to support object understanding, spatial and situated reasoning, and embodied question answering despite significant token pruning. These visualizations complement our quantitative findings by demonstrating that coverage-based token pruning maintains diverse scene information to perform 3D multi-view reasoning.

\begin{figure*}[t]
    \centering
    \includegraphics[width=0.95\textwidth]{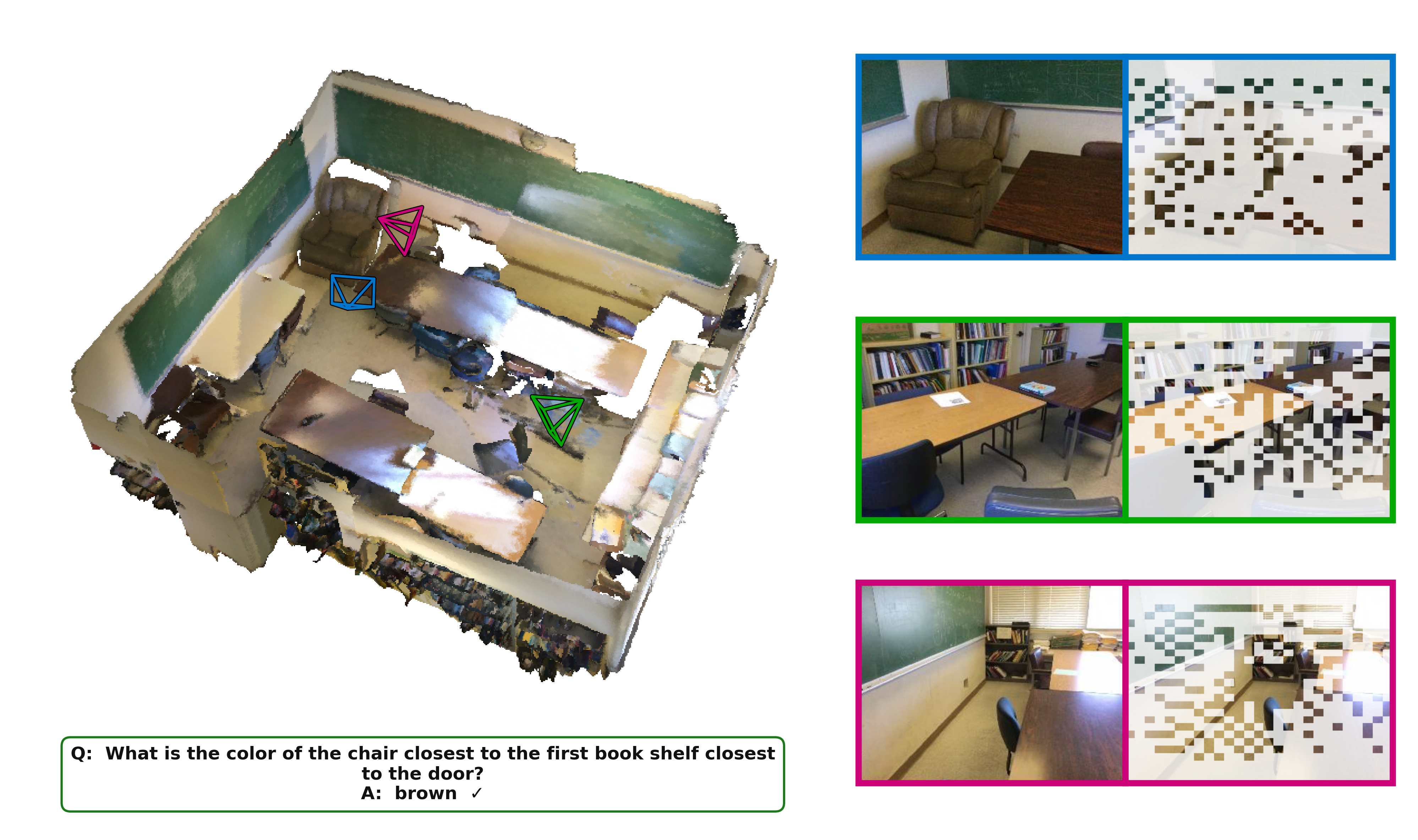}
    \caption{\textbf{Visual Results on ScanQA}~\cite{scanqa}
    at 23\% token retention.}
    \label{fig:scanqa_vis1}
\end{figure*}

\begin{figure*}[t]
    \centering
    \includegraphics[width=0.95\textwidth]{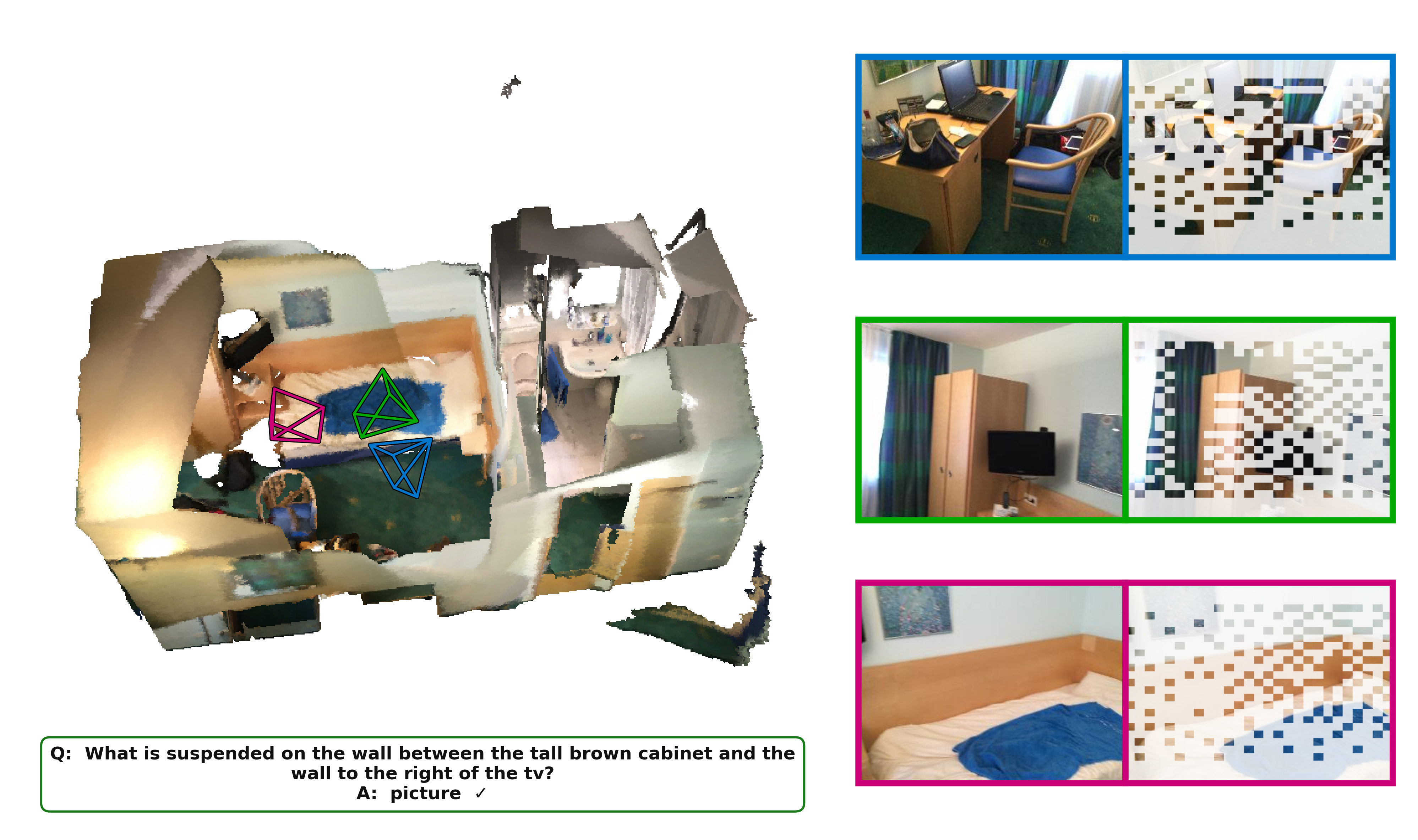}
    \caption{\textbf{Visual Results on ScanQA}~\cite{scanqa}
    at 23\% token retention.}
    \label{fig:scanqa_vis2}
\end{figure*}

\begin{figure*}[t]
    \centering
    \includegraphics[width=0.9\textwidth]{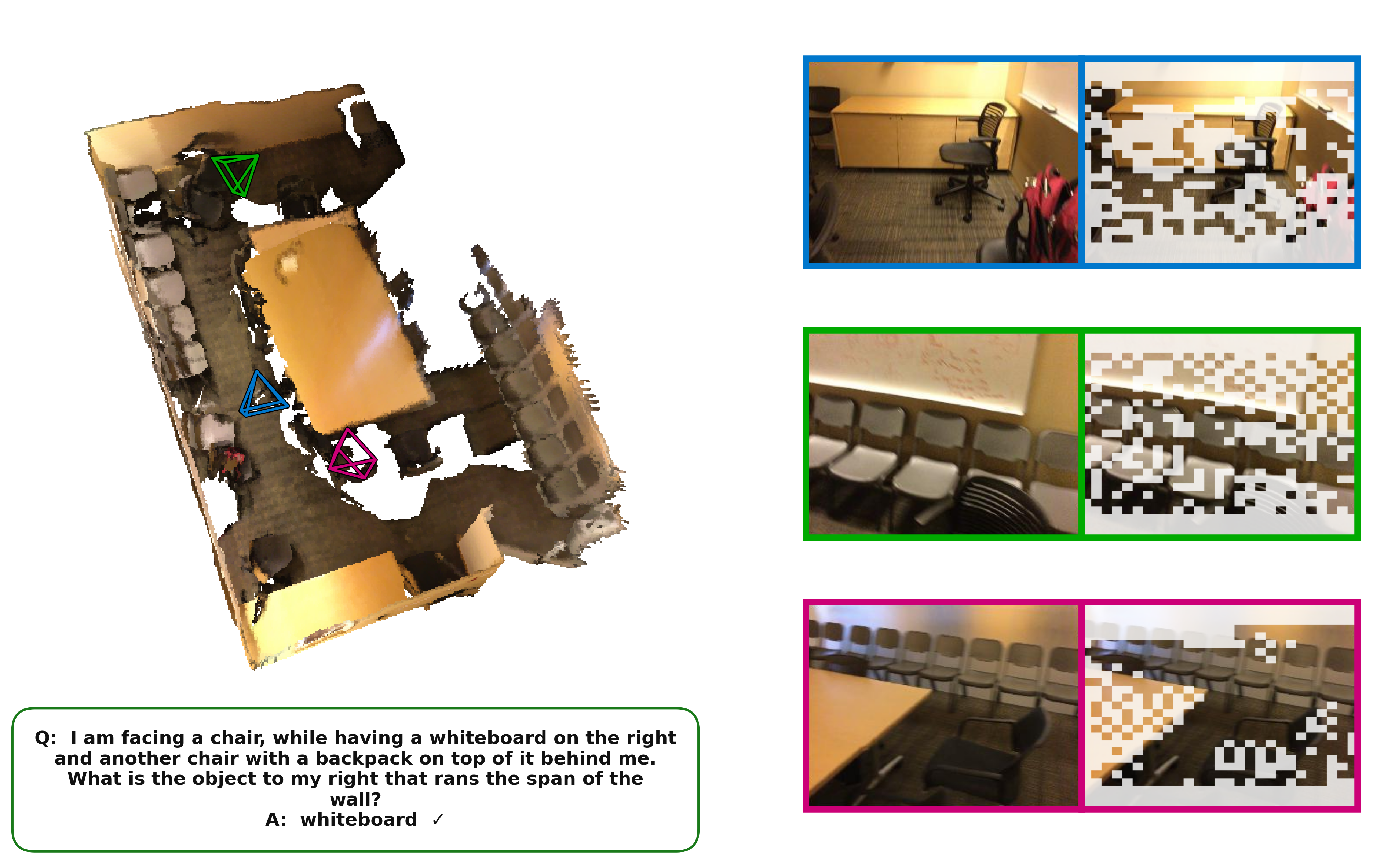}
    \caption{\textbf{Visual Results on SQA3D}~\cite{sqa}
    at 43\% token retention.}
    \label{fig:sqa_vis1}
\end{figure*}

\begin{figure*}[t]
    \centering
    \includegraphics[width=0.9\textwidth]{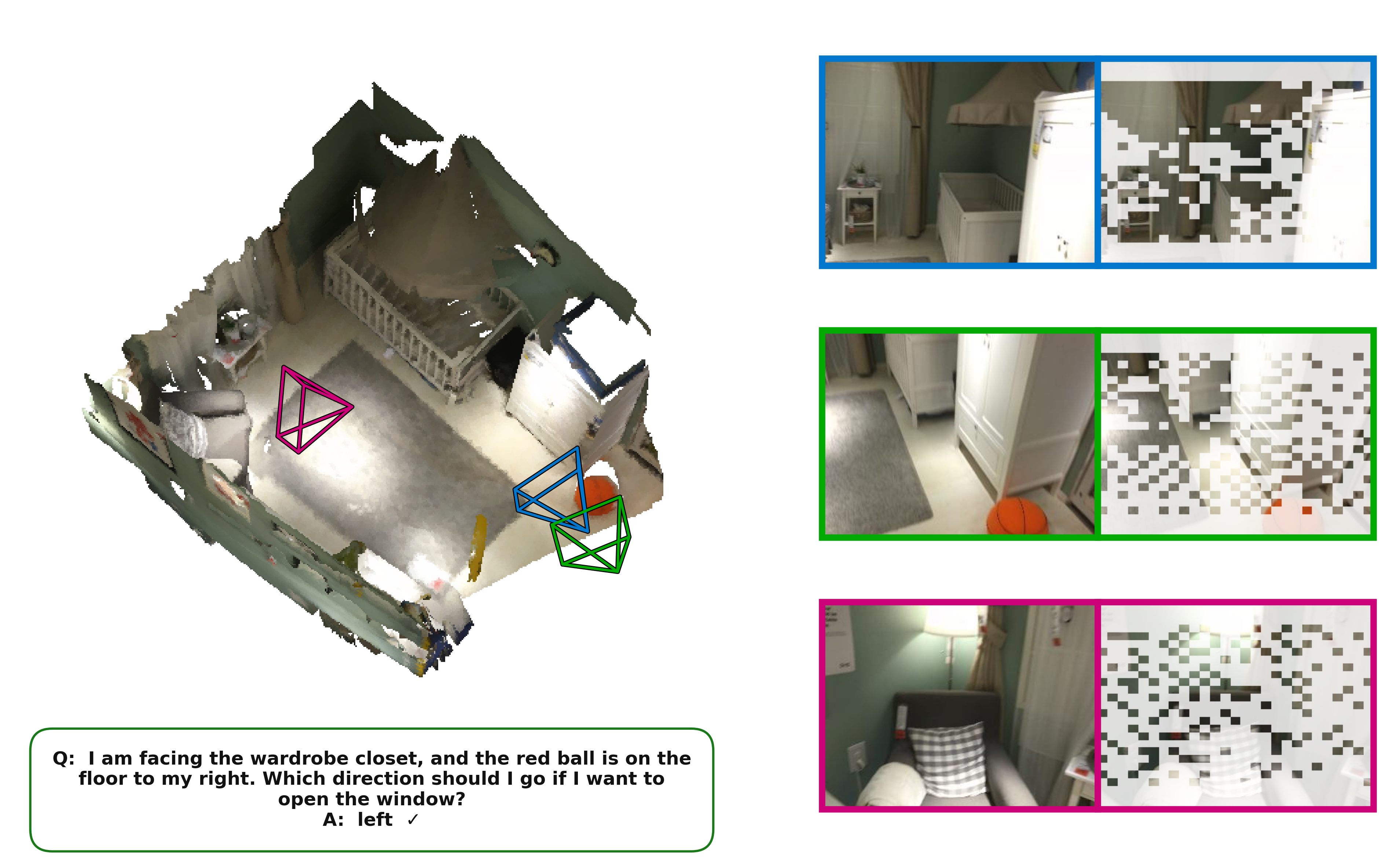}
    \caption{\textbf{Visual Results on SQA3D}~\cite{sqa}
    at 26\% token retention.}
    \label{fig:sqa_vis2}
\end{figure*}

\begin{figure*}[t]
    \centering
    \includegraphics[width=0.95\textwidth]{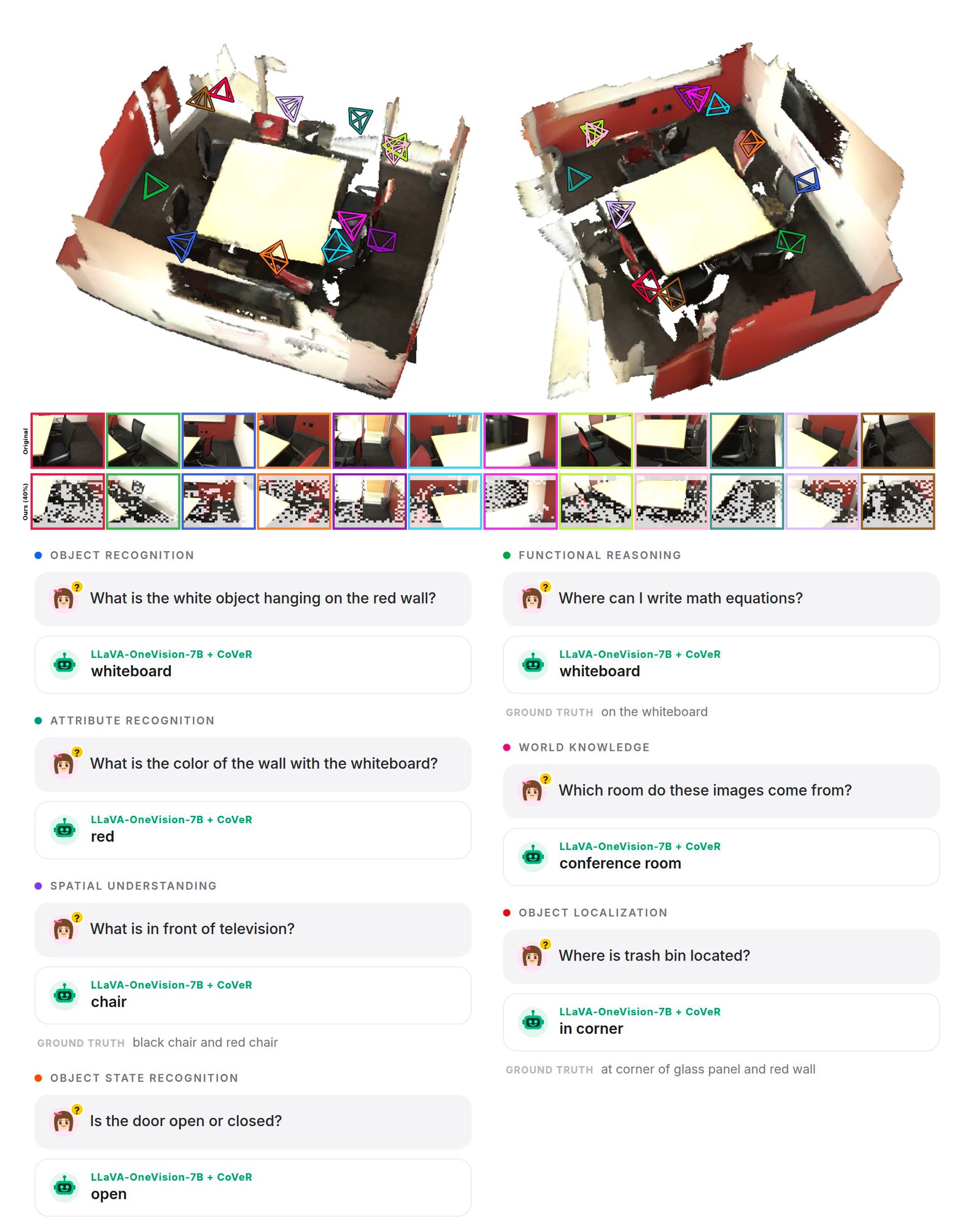}
    \caption{\textbf{Visual Results on OpenEQA}~\cite{openeqa}
    at 40\% token retention.}
    \label{fig:openeqa_vis}
\end{figure*}

\section{Broader Impact}
\textbf{Positive impact.} CoVeR significantly reduces inference computation and memory, while maintaining the performance of baseline VLMs, making multi-view 3D reasoning more accessible to users with limited computational resources. It also provides precise token reduction control in VLMs. CoVeR also potentially lowers deployment energy costs in robotic and embodied AI applications.

\noindent\textbf{Potential negative impact.} Real-world visual reasoning can raise important privacy concerns. In addition, aggressive pruning may discard important cues, potentially leading to incorrect VLM answers in safety-critical systems like healthcare or autonomous driving. Besides, since CoVeR is a general-purpose method, it may inherit the societal risks, biases, and failure modes of its underlying VLMs and data.